\newcommand{\dx}{\mathrm{d}\vx}
\newcommand{\veta}{\bm{\eta}}
\renewcommand{\d}{\mathrm{d}}
\theoremstyle{plain}
\newtheorem{theorem}{Theorem}[section]
\newtheorem{lemma}[theorem]{Lemma}
\theoremstyle{definition}
\newtheorem{definition}[theorem]{Definition}
\theoremstyle{remark}
\def\eqref#1{equation~\ref{#1}}
\def\Eqref#1{Equation~\ref{#1}}
\numberwithin{equation}{section}
\def\1{\bm{1}}
\def\vg{{\bm{g}}}
\def\vh{{\bm{h}}}
\def\vw{{\bm{w}}}
\def\vx{{\bm{x}}}
\def\vz{{\bm{z}}}
\def\vX{{\bm{X}}}
\def\vY{{\bm{Y}}}
\def\vZ{{\bm{Z}}}
\def\v0{\bm{0}}
\DeclareMathAlphabet{\mathsfit}{\encodingdefault}{\sfdefault}{m}{sl}
\SetMathAlphabet{\mathsfit}{bold}{\encodingdefault}{\sfdefault}{bx}{n}
\newcommand{\E}{\mathbb{E}}
\icmltitlerunning{Consistent Diffusion Meets Tweedie}
\begin{document}

\twocolumn[\icmltitle{Consistent Diffusion Meets Tweedie: \\ Training Exact Ambient Diffusion Models with Noisy Data}

\icmlsetsymbol{equal}{*}

\begin{icmlauthorlist}
    \icmlauthor{Giannis Daras}{ut_austin_cs,archimedes}
    \icmlauthor{Alexandros G. Dimakis}{ut_austin_ece}
    \icmlauthor{Constantinos Daskalakis}{mit,archimedes}
\end{icmlauthorlist}

\icmlaffiliation{ut_austin_cs}{Department of Computer Science, University of Texas at Austin}
\icmlaffiliation{ut_austin_ece}{Department of Electrical and Computer Engineering, University of Texas at Austin}
\icmlaffiliation{mit}{Department of Electrical Engineering and Computer Science, MIT}
\icmlaffiliation{archimedes}{Archimedes AI}

\icmlcorrespondingauthor{Giannis Daras}{giannisdaras@utexas.edu}
\icmlcorrespondingauthor{Alexandros G. Dimakis}{dimakis@austin.utexas.edu}
\icmlcorrespondingauthor{Constantinos Daskalakis}{costis@csail.mit.edu}

\icmlkeywords{Machine Learning, ICML, diffusion models, ambient diffusion, corrupted data, tweedie}

\vskip 0.3in
]

\printAffiliationsAndNotice{\icmlEqualContribution} %

\newcommand{\ap}[1]{\textcolor{black}{#1}}
\newcommand{\cnote}[1]{\textcolor{blue}{#1}}

\begin{abstract}
    Ambient diffusion is a recently proposed framework for training diffusion models using corrupted data. Both Ambient Diffusion and alternative SURE-based approaches for learning diffusion models from corrupted data resort to approximations which deteriorate performance. 
    We present the first framework for training diffusion models that provably sample from the uncorrupted distribution given only noisy training data, solving an open problem in Ambient diffusion. Our key technical contribution is a method that uses a double application of Tweedie's formula and a consistency loss function that allows us to extend sampling at noise levels below the observed data noise. 
    We also provide further evidence that diffusion models memorize from their training sets by identifying extremely corrupted images that are almost perfectly reconstructed, raising copyright and privacy concerns.  
    Our method for training using corrupted samples can be used to mitigate this problem. We demonstrate this by fine-tuning Stable Diffusion XL to generate samples from a distribution using only noisy samples. Our framework reduces the amount of memorization of the fine-tuning dataset, while maintaining competitive performance.
\end{abstract}

\begin{figure*}[!ht]
    \centering
    \includegraphics[width=0.8\textwidth]{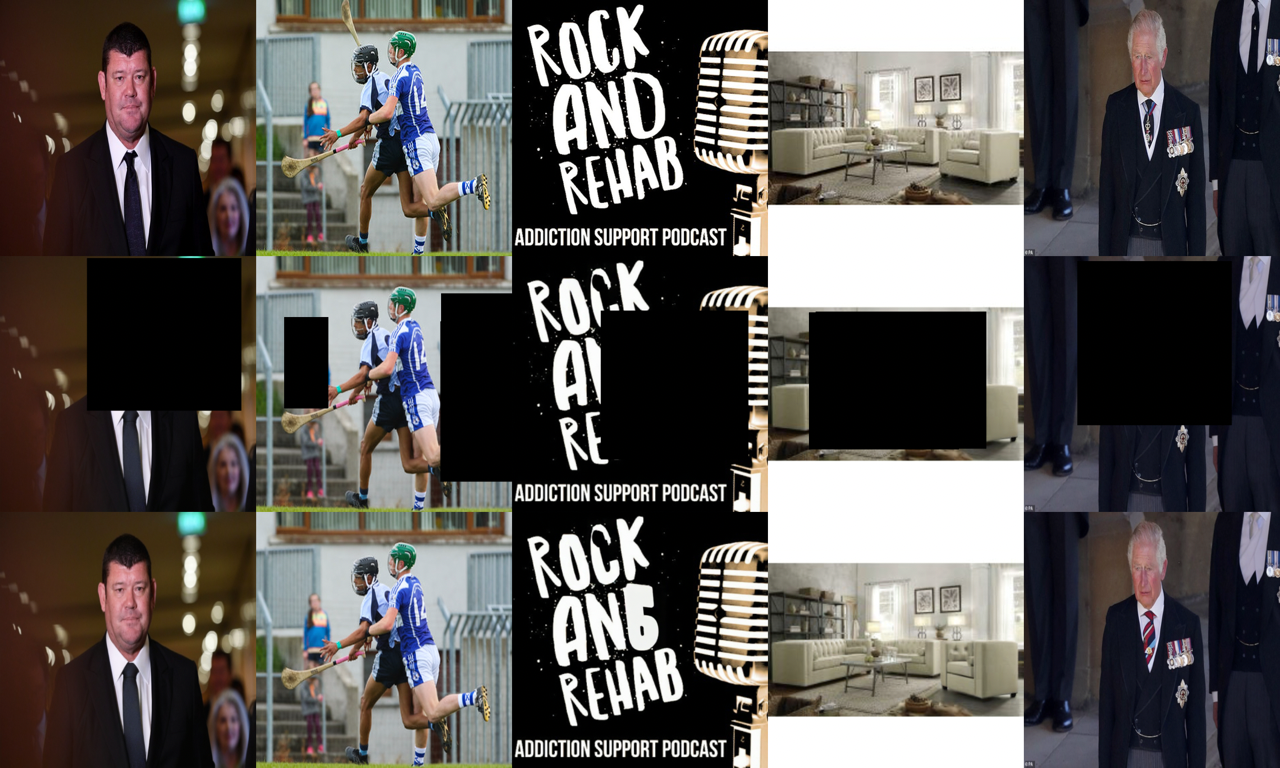}
    \caption{Top row: images from LAION~\citep{schuhmann2022laion}, middle row: masked images, bottom row: reconstructed images with the SDXL~\citep{podell2023sdxl} inpainting model. The accuracy of the reconstructions presents strong evidence that the images on the top-row were in the training set of SDXL (or SDXL Inpainting) and have been memorized. To the best of our knowledge, SDXL does not disclose its training set.}
    \label{fig:sdxl_inp_attack}
\end{figure*}

\begin{figure}[!ht]
  \centering
  \begin{subfigure}{0.5\textwidth}
    \includegraphics[width=\linewidth]{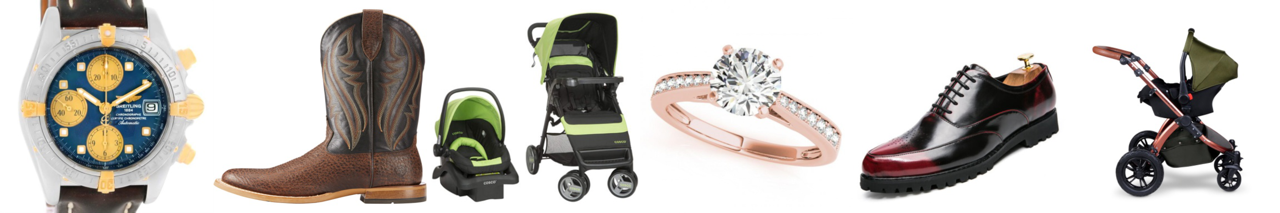}
    \caption{Images from LAION.}
  \end{subfigure}
  \begin{subfigure}{0.5\textwidth}
    \includegraphics[width=\linewidth]{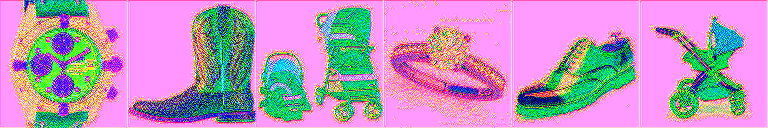}
    \caption{Corresponding latents.}
  \end{subfigure}
  \begin{subfigure}{0.5\textwidth}
    \includegraphics[width=\linewidth]{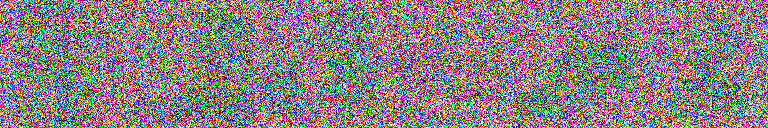}
    \caption{Noisy Latents at level $t=900$.}
  \end{subfigure}
  \begin{subfigure}{0.5\textwidth}
    \includegraphics[width=\linewidth]{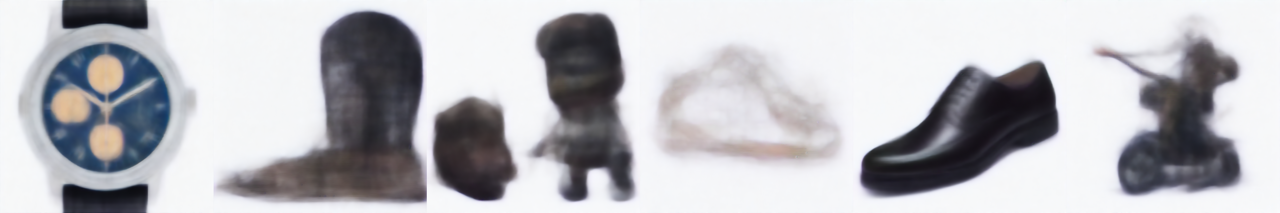}
    \caption{MMSE.}
  \end{subfigure}
  \begin{subfigure}{0.5\textwidth}
    \includegraphics[width=\linewidth]{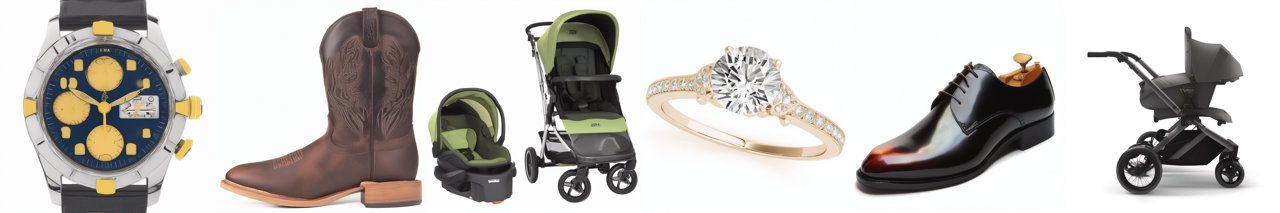}
    \caption{Generated images.}
  \end{subfigure}
  \caption{SDXL~\citep{podell2023sdxl} posterior samples (Row e) given extremely noisy encodings (Row c) of LAION images (Row a). The level of fidelity of the reconstructions to the original images, despite the severe corruption (c) and the blurriness of the MMSE solution (d), indicates that the images were potentially in the training set and have been memorized.}
  \label{fig:sdxl_noise_attack}
\end{figure}

\section{Introduction}
\label{sec:introduction}
In recent years, we have witnessed remarkable progress in image generation as exemplified by state-of-the-art models such as Stable Diffusion (-XL)~\citep{ldm, podell2023sdxl} and DALL-E (2, 3)~\citep{dalle3}. This progress has been driven by two major enablers: i)~the diffusion modeling framework~\citep{ddpm, ncsn, ncsnv3}; and ii)~the existence of massive datasets of image-text pairs~\citep{schuhmann2022laion, gadre2023datacomp}.\looseness=-1 

The need for high-quality, web-scale data and the intricacies involved in curating datasets at that scale often result in the inclusion of copyrighted content. Making things worse, diffusion models memorize training examples more than previous generative modeling approaches~\citep{carlini2023extracting, somepalli2023understanding}, such as Generative Adversarial Networks~\citep{goodfellow2020generative}, often replicating parts or whole images from their training set.

A recently proposed  strategy for mitigating the memorization issue is to train (or fine-tune) diffusion models using corrupted data~\cite{daras2023ambient, somepalli2023understanding, daras2023solving}. Indeed, developing a capability for training diffusion models using corrupted data can also find applications in domains where access to uncorrupted data is expensive or impossible, e.g.~in MRI~\citep{aali2023solving} or black-hole imaging~\citep{linimaging, Akiyama_2019}. Unfortunately, existing methods for learning diffusion models from corrupted data~\citep{daras2023ambient, aali2023solving, kawar2023gsure, xiang2023ddm} resort to approximations (during training or sampling) that significantly hurt performance. Our contributions are as follows:
\begin{enumerate}[i]
    \item We propose the first \textit{exact} framework for learning diffusion models using only corrupted samples. Our key technical contributions are: i) a computationally efficient method for learning  optimal denoisers for all levels of noise $\sigma \geq \sigma_n$, where $\sigma_n$ is the standard deviation of the noise in the training data, obtained by applying Tweedie's formula twice; and ii) a consistency loss function~\citep{daras2023consistent} for learning the optimal denoisers for noise levels $\sigma \leq \sigma_n$. Note that given samples at level of noise $\sigma_n$ it is possible to obtain samples at  levels of noise $\sigma > \sigma_n$ (by adding further noise) but prior to our work it was not known how to train diffusion models to obtain samples at levels of noise $\sigma < \sigma_n$.
    
    \item We provide further evidence that foundation diffusion models memorize their training sets by showing that extremely corrupted training images can be almost perfectly reconstructed. Moreover, we show that memorization occurs at a higher rate than previously anticipated.\looseness=-1

    \item We use our framework to fine-tune diffusion foundation models using corrupted data and show that the performance of our trained model declines (as the corruption in the training data increases) at a much slower rate compared to previously proposed approaches.
    
    \item We evaluate trained models against our as well as a baseline method for testing data replication and we show that models trained under data corruption memorize significantly less.

    \item We open-source our code to facilitate further research in this area: \href{https://github.com/giannisdaras/ambient-tweedie}{https://github.com/giannisdaras/ambient-tweedie}.
\end{enumerate}

\section{Background and Related Work}
\label{sec:background}

Consider a distribution of interest  admitting a density function $p_0$. Our goal is to train a diffusion model that generates samples from $p_0$. However, we only have access to noisy samples from $p_0$. In particular, we have samples of the form $\vX_{t_n} = \vX_0 + \sigma_{t_n} \vZ$, where $\vX_0 \sim p_0$ and  $\vZ \sim \mathcal N(\v0, I_d)$. We denote by $p_{t_n}$ the distribution density of these samples. Throughout the paper we fix an increasing non-negative function $\sigma(t)$, where $t \in [0,T]$, $T>0$, and $\sigma(0)=0$, and denote $\sigma(t)$ by $\sigma_t$. We take $t_n \in (0,T)$. The subscript `$n$' in $t_n$ refers to ``nature'' and, as stated above, we assume that nature is giving us access to samples at noise level $\sigma_{t_n}$. We denote by $p_t$ the distribution of random variable $\vX_{t} = \vX_0 + \sigma_{t} \vZ$, where $\vX_0 \sim p_0$ and  $\vZ \sim \mathcal N(\v0, I_d)$.

\subsection{Background on denoising diffusion models} Diffusion models can equivalently be viewed as denoisers at many different noise levels $\sigma_t$, $t \in [0,T]$. They are typically trained with the Denoising Score Matching loss:
\vspace{-0.25em}
\begin{gather*}
    J_{\mathrm{DSM}}(\theta) = \nonumber \\ \E_{\vx_0 \sim p_0(\vx_0)} \E_{t \sim \mathcal U[0, T]} \E_{\vx_t \sim p_t(\vx_t|\vx_0)}\left[ \left| \left| \vh_{\theta}(\vx_t, t) - \vx_0 \right|\right|^2\right].
    \label{eq:dsm_loss}
\end{gather*}
If the function class $\{\vh_{\theta}\}$ is sufficiently rich, the minimizer of this loss satisfies $\vh_{\theta^*}(\vx_t, t) = \E[\vX_0 | \vX_t = \vx_t]$ for all $t, \vx_t$. Tweedie's formula connects the conditional expectation, i.e. the best denoiser in the $\ell_2^2$ sense, with the score function $\nabla \log p_t(\vx_t)$,
\begin{gather}
    \nabla \log p_t(\vx_t) = \frac{\E[\vX_0 | \vX_t = \vx_t] - \vx_t}{\sigma_t^2}.
    \label{eq:tweedies_main_paper}
\end{gather}
The score can be then used to sample from $p_0(\vx_0)$~\citep{anderson}, by sampling a trajectory from the following Stochastic Differential Equation (which is the reverse diffusion process of the process that adds Gaussian noise to a sample from $p_0$ according to the noise schedule $\sigma_t$):
\begin{gather}
    \dx_t = -2\sigma_t \d \sigma_t\nabla \log p_t(\vx_t) + \sqrt{\frac{\d \sigma_t^2}{\d t}} \d\bar\vw,
\end{gather}
initialized at $\vx_T \sim p_T(\vx_T)$, where $\bar \vw$ is a standard Wiener process when the time flows backwards (from $T$ to 0). In practice, the score function is replaced with its estimate, i.e. we run the process:
\begin{gather}
    \dx_t = -2\d \sigma_t\frac{\vh_{\theta}(\vx_t, t) - \vx_t}{\sigma_t} + \sqrt{\frac{\d \sigma_t^2}{\d t}} \d\bar\vw.
    \label{eq:reverse_theta}
\end{gather}

As we have described, in our setting of interest, we do not have access to samples from $p_0$, thus we are not in a position to train a diffusion model using~\Eqref{eq:dsm_loss}. Yet, we still want to learn $\E[\vX_0 | \vX_t = \vx_t]$ (and thus, the score) for \textit{all noise levels $t$}. A natural first question is whether we can at least learn denoisers for noise levels that are equal to that of the available data or larger, i.e.~for $t: \sigma_t \geq \sigma_{t_n}$.

\subsection{Prior Work on learning denoisers from noisy data} Prior work has given an affirmative answer to the question at the end of the previous section. One of the most established methods is \textbf{S}tein’s \textbf{U}nbiased \textbf{R}isk \textbf{E}stimate (\textbf{SURE})~\citep{stein1981estimation}. SURE learns the conditional expectation of $\vX_0$ given a sample $\vX_{t_n}$, by minimizing the following objective \textit{that only uses the noisy realizations}:
\begin{gather}
    J_{\mathrm{SURE}}(\theta) = \nonumber \\  \E_{\vx_{t_n} \sim p_{t_n}}\left[\left|\left| \vh_{\theta}(\vx_{t_n}) -\vx_{t_n}\right|\right|^2 + 2\sigma_{t_n}^2 (\nabla_{\vx} \cdot \vh_{\theta}(\vx_{t_n}))^2\right], \nonumber
    \label{eq:sure_main_paper}
\end{gather}
The divergence term is expensive to evaluate and is typically replaced with the Monte Carlo approximation:
\begin{gather*}
    (\nabla_{\vx} \cdot \vh_{\theta}(\vx_{t_n}))^2 \approx \vz^T \left( \frac{\vh_{\theta}(\vx_{t_n} + \epsilon\vz) - \vh_{\theta}(\vx_{t_n}) }{\epsilon} \right),
\end{gather*}
for some small, positive parameter $\epsilon$ and $\vz \sim N(\v0, I_d)$~\citep{aggarwal2022ensure, soltanayev2018training, metzler2018unsupervised}. An alternative approximation is to compute the Jacobian Vector Product $\vz^T \nabla \vh_{\theta}(\vx_{t_n})\vz, \ \vz \sim \mathcal N(\v0, I_d)$, with automatic differentiation tools~\citep{kawar2023gsure}. Both methods give unbiased estimators for the divergence term and the variance can be decreased by averaging over many $\vz$ (at the cost of increased computation).

Another line of work is the Noise2Noise~\citep{noise2noise} framework and its generalizations~\citep{batson2019noise2self, krull2019noise2void, pang2021recorrupted, xu2020noisy, moran2020noisier2noise}. Most relevant to our work are the Noisier2Noise~\citep{moran2020noisier2noise} and Noisy-As-Clean~\citep{xu2020noisy} approaches wherein the training goal is to predict the noisy signal from a further corrupted version of it. Noisier2Noise comes with the theoretical guarantee of learning  $\E[\vX_0 | \vX_t = \vx_t]$ \textit{for a single} $t: \sigma_t \geq \sigma_{t_n}$, where by ``a single $t$'' we mean that, for each $t$ of interest, a new problem needs to be solved.

\subsection{Prior work on learning diffusion models from corrupted data}
\label{subsec:prior_work_ambient_diffusion}
In contrast to the previous section, there is no known approach for learning $\E[\vX_0 | \vX_t = \vx_t]$ for noise levels $t: \sigma_t \leq \sigma_{t_n}$. Thus, it is not known how to train an exact diffusion model using noisy samples, so various approximations have been considered, as described below.

\citet{aali2023solving} uses SURE to learn the optimal denoiser at the noise level of the available data, i.e. $\E[\vX_0 | \vX_{t_n}=\vx_{t_n}]$, and then creates iterates $\tilde \vX_t = \E[\vX_0 | \vX_{t_n}] + \sigma_{t}\vZ$ at all noise levels $t$ to train with Denoising Score Matching. However, the underlying noisy distributions, $\tilde p_t(\vx_t)$, are different than $p_t(\vx_t)$, since the R.V. $\vX_0$ has been replaced with $\E[\vX_0 | \vX_{t_n}]$. An alternative approach is to use SURE to learn $\E[\vX_0 | \vX_t]$ for all levels $t: \sigma_t \geq \sigma_{t_n}$ and early stop diffusion sampling at time $t: \sigma_t = \sigma_{t_n}$. This type of approach is adopted by \citet{kawar2023gsure} and \citet{xiang2023ddm} and it only guarantees samples from the distribution $\E[\vX_0 | \vX_{t_n}]$.

Notably, similar problems arise in the setting of training diffusion models from linearly corrupted data, i.e. when the available samples are $\vY_0 = A\vX_0$, for a known matrix $A$, as considered in the Ambient Diffusion paper~\citep{daras2023ambient}. In this setting, the authors manage to learn $\E[\vX_0 | A\vX_t]$ for all $t$, but not $\E[\vX_0 | \vX_t]$, where $\vX_t = \vX_0 + \sigma_t Z$, as always in this paper. Similar challenges are encountered in the G-SURE paper~\citep{kawar2023gsure}.

In sum, training exact diffusion models, i.e.~diffusion models sampling the target distribution $p_0$, given corrupted data remains unsolved. In this paper, we resolve this open-problem with two key technical contributions:  i) an  efficiently computable objective for learning the optimal denoisers for all levels of noise  $t:\sigma_t \geq \sigma_{t_n}$, obtained by applying Tweedie's formula twice; and ii) a consistency loss for learning the optimal denoisers for levels of noise  $t:\sigma_t \leq \sigma_{t_n}$. We describe these contributions in the next section.

\section{Method}
\label{sec:method}

\subsection{Learning the Optimal Denoiser for $\sigma_t > \sigma_{t_n}$} \label{subsec:double_tweedie}We first present an  efficiently computable objective that resembles Denoising Score Matching and enables learning the optimal denoisers for all noise levels $t: \sigma_t > \sigma_{t_n}$.

\begin{theorem}[Ambient Denoising Score Matching]
Define $\vX_t$ as in the beginning of Section~\ref{sec:background}. Suppose we are given samples $\vX_{t_n} = \vX_0 + \sigma_{t_n}\vZ$, where $\vX_0 \sim p_0$ and $\vZ \sim \mathcal N(\v0, I)$.
Consider the following objective:
\begin{gather}
        \E_{\vx_{t_n}}\E_{t \sim \mathcal U(t_n, T]}\E_{\vx_t = \vx_{t_n} + \sqrt{\sigma_t^2 - \sigma_{t_n}^2} \veta}\left[ \left|\left|\frac{\sigma_{t}^2 - \sigma_{t_n}^2}{\sigma_t^2} \vh_{\theta}(\vx_t, t)  + \frac{\sigma_{t_n}^2}{\sigma_t^2}\vx_t - \vx_{t_n}\right|\right|^2\right], \nonumber
        \label{eq:ours_obj}
\end{gather}

where $\veta$ in the above is a standard Gaussian vector. 
 Suppose that the family of functions $\{\vh_\theta\}$ is rich enough to contain the minimizer of the above objective overall functions $\vh(\vx,t)$. Then the minimizer $\theta^*$ of $J$ satisfies: 
\begin{gather}
    \vh_{\theta^*}(\vx_t, t) = \mathbb E[\vX_0 | \vX_t=\vx_t], \quad \forall \vx_t, t > t_n.
\end{gather}
\label{th:sure_alt}
\end{theorem}
\vspace{-1.5em}
The theorem above states that we can estimate the best $l_2^2$ denoisers for all noise levels $t: \sigma_t > \sigma_{t_n}$ without ever seeing clean data from $p_0$ and using an efficiently computable objective that contains no divergence term.
\paragraph{Proof Overview.} The central idea for this proof is to apply Tweedie's Formula twice, on appropriate random variables. We start by stating (a generalized version of) Tweedie's formula, the proof of which is given in the Appendix.
\begin{lemma}[Generalized Tweedie's Formula]
\label{lemma:Tweedies_general}
    Let:
    \begin{gather}
        \vX_t = \alpha_t \vX_0 + \sigma_t \vZ,
    \end{gather}
    for $\vX_0 \sim p_{0}$, $\vZ \sim \mathcal N(\v0, I)$, and some positive function $\alpha_t$ of $t$.
    Then,
    \begin{gather}
        \nabla_{\vx} \log p_t(\vx_t) = \frac{\alpha_t \E[\vX_0 | \vX_t = \vx_t] - \vx_t}{\sigma_t^2}.
    \end{gather}
\end{lemma}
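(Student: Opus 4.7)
The plan is to prove Lemma \ref{lemma:Tweedies_general} by direct computation via Bayes' rule, mirroring the classical derivation of Tweedie's formula but carrying the scale factor $\alpha_t$ through the calculation. First I would write the marginal density of $\vX_t$ as the convolution
\begin{gather*}
p_t(\vx_t) \;=\; \int p_0(\vx_0)\,\phi_{\sigma_t}\!\bigl(\vx_t - \alpha_t \vx_0\bigr)\,d\vx_0,
\end{gather*}
where $\phi_{\sigma_t}$ denotes the $\mathcal N(\v0,\sigma_t^2 I)$ density. This is just the law of $\vX_t=\alpha_t\vX_0+\sigma_t\vZ$ written as a marginalization over $\vX_0$, and it is where the factor $\alpha_t$ enters the Gaussian kernel.

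Next I would differentiate under the integral sign. Using the explicit gradient
\begin{gather*}
\nabla_{\vx_t}\phi_{\sigma_t}(\vx_t-\alpha_t\vx_0) \;=\; -\frac{\vx_t-\alpha_t \vx_0}{\sigma_t^2}\,\phi_{\sigma_t}(\vx_t-\alpha_t\vx_0),
\end{gather*}
I obtain $\nabla_{\vx_t} p_t(\vx_t) = -\int \tfrac{\vx_t-\alpha_t \vx_0}{\sigma_t^2}\,p_0(\vx_0)\,\phi_{\sigma_t}(\vx_t-\alpha_t\vx_0)\,d\vx_0$. Dividing by $p_t(\vx_t)$ and recognizing Bayes' rule, $p_{0\mid t}(\vx_0\mid\vx_t) = p_0(\vx_0)\phi_{\sigma_t}(\vx_t-\alpha_t\vx_0)/p_t(\vx_t)$, converts the integral into a conditional expectation:
\begin{gather*}
\nabla_{\vx_t}\log p_t(\vx_t) \;=\; -\,\E\!\left[\frac{\vx_t-\alpha_t\vX_0}{\sigma_t^2}\,\Big|\,\vX_t=\vx_t\right] \;=\; \frac{\alpha_t\,\E[\vX_0\mid\vX_t=\vx_t]-\vx_t}{\sigma_t^2},
\end{gather*}
which is the claimed identity.

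There is no serious obstacle here; the main thing to justify carefully is the interchange of differentiation and integration, which is standard under mild regularity on $p_0$ (any tail bound making $p_t$ a $C^1$ density suffices, since the Gaussian kernel provides uniform local bounds on the integrand and its $\vx_t$-gradient). The only way this proof differs from the textbook $\alpha_t=1$ case is the bookkeeping of $\alpha_t$ inside the argument of $\phi_{\sigma_t}$, which propagates linearly into the numerator and yields $\alpha_t\,\E[\vX_0\mid\vX_t=\vx_t]$ rather than $\E[\vX_0\mid\vX_t=\vx_t]$. I would therefore present the proof compactly as the three-step chain above, flagging that setting $\alpha_t=1$ recovers the usual Tweedie identity used in \eqref{eq:tweedies_main_paper}.
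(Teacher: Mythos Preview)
Your proposal is correct and follows essentially the same approach as the paper's proof: write $p_t(\vx_t)=\int p_0(\vx_0)\,p_t(\vx_t\mid\vx_0)\,d\vx_0$ with $p_t(\vx_t\mid\vx_0)$ the Gaussian kernel centered at $\alpha_t\vx_0$, differentiate under the integral, and use Bayes' rule to recognize the posterior and hence the conditional expectation. The paper's version is slightly terser (it does not comment on the interchange of differentiation and integration), but the chain of equalities is identical to yours.
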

For $t: \sigma_t > \sigma_{t_n}$, the R.V. $\vX_t$ can be written in the following two equivalent ways:
\begin{gather}
    \begin{cases}
        \vX_t = \vX_0 + \sigma_t \vZ \\
        \vX_t = \vX_{t_n} + \sqrt{\sigma_t^2 - \sigma_{t_n}^2}\vZ
    \end{cases}.
\end{gather}
By applying  Tweedie's formula twice, we get two alternative 
expressions for the same score-function since the distribution remains the same, irrespectively of how we choose to express $\vX_t$. By equating the two expressions for the score, we arrive at the following result:
\begin{gather}
    \E[\vX_{t_n} | \vX_t =\vx_t] = \frac{\sigma_t^2 - \sigma_{t_n}^2}{\sigma_t^2}\left(\E[\vX_0 | \vX_t = \vx_t] - \vx_t \right) + \vx_t. \nonumber
\end{gather}

We can train a network with denoising score matching to estimate $\E[\vX_{t_n} | \vX_t = \vx_t]$ and hence we can use the above equation to obtain $\E[\vX_0 | \vX_t=\vx_t]$, as desired.

The method we propose is conceptually similar to Noisier2Noise~\citep{moran2020noisier2noise} but instead of adding noise with a fixed magnitude to create further corrupted iterates, we consider a continuum of noise scales and we train the model jointly in a Denoising Score Matching fashion.

We underline that our method can be easily extended to the Variance Preserving (VP)~\citep{ncsnv3} case, i.e. when the available data are $\vX_{t_n} = \sqrt{1 - \sigma_{t_n}^2}\vX_0 + \sigma_{t_n}\vZ$. This is the setting for our Stable Diffusion finetuning experiments (see Section \ref{sec:experiments}). For the sake of simplicity, we avoid these calculations in the main paper and we point the interested reader to the Appendix (see Theorem \ref{th:sure_alt_vp}).

\paragraph{Developing Intuition.} A nice interpretation of our method is that it trains the network to predict the denoised image $\E[\vX_0 | \vX_t=\vx_t]$ by removing \textit{additional noise} that we introduced to the given samples $\vx_{t_n}$. This is similar to the idea of \textit{further corruption} developed in Ambient Diffusion~\citep{daras2023ambient}. The way we create further noisy samples $\vx_t$ given samples $\vx_{t_n}$ has some high-level connections to DDRM~\citep{ddrm} that reuses noise in the measurements to solve inverse problems with diffusion models.

\subsection{Learning the Optimal Denoiser for $\sigma_t \leq \sigma_{t_n}$}\label{subsec:consistency}
Theorem~\ref{th:sure_alt}  allows us to learn the optimal denoisers for $t: \sigma_t > \sigma_{t_n}$. However, to perform exact sampling we need to also learn $\E[\vX_0 | \vX_t =\vx_t]$ for $t: \sigma_t \le \sigma_{t_n}$. We achieve this by \textit{training the network to be consistent}.

\begin{definition}[Consistent Denoiser~\citep{daras2023consistent}]
Let $p_{\theta}(\vx_{t'}, t' | \vx_t, t)$ be the density of the sample $\vX_{t'}$ of the stochastic diffusion process of \Eqref{eq:reverse_theta} at time $t'$ when initialized with $\vx_t$ at time $t > t'$. The network $\vh_{\theta}(\cdot, t)$ that drives the process is a {\em consistent denoiser} if:
        \begin{gather}
            \vh_{\theta}(\vx_t, t) = \E_{\vX_{t'} \sim p_{\theta}(\vx_{t'}, t' | \vx_t, t)}\left[ \vh_{\theta}(\vX_{t'}, t')\right].
        \end{gather}
\label{def:consistency}
\end{definition}
\vspace{-1.5em}
The concept of consistency was introduced by~\citet{daras2023consistent} as a way to reduce error propagation in diffusion sampling and improve performance. Here, we find a completely different use case: we use consistency to learn the optimal denoisers for levels below the noise level of the available data. We are now ready to state our main theorem.
\begin{theorem}[Main Theorem (informal)]
    Define $\vX_t$ as in the beginning of Section~\ref{sec:background}. Suppose we are given samples $\vX_{t_n} = \vX_0 + \sigma_{t_n}\vZ$, where $\vX_0 \sim p_0$ and $\vZ \sim \mathcal N(\v0, I)$.
Consider the following objective:
    \begin{gather}
       \overbrace{\E_{\vx_{t_n}}\E_{t \sim \mathcal U[t_n, T]}\E_{\vx_t = \vx_{t_n} + \sqrt{\sigma_t^2 - \sigma_{t_n}^2} \veta}\left[ \left|\left|\frac{\sigma_{t}^2 - \sigma_{t_n}^2}{\sigma_t^2}\vh_{\theta}(\vx_t, t)+ \frac{\sigma_{t_n}^2}{\sigma_t^2}\vx_t - \vx_{t_n}\right|\right|^2\right]}^{\mathrm{Ambient \ Score \ Matching}}
        \nonumber \\ + \underbrace{\E_{t \sim \mathcal U(t_n, T], t' \sim \mathcal U(\epsilon, t), t'' \sim \mathcal U(t'-\epsilon, t')} \E_{\vx_{t}} \E_{\vx_{t'}|\vx_t}[
        \left|\left|\vh_{\theta}(\vx_{t'}, t') - \E_{\vx_{t''} \sim p_{\theta}(\vx_{t''}, t'' | \vx_{t'}, t')}\left[\vh_{\theta}(\vx_{t''}, t'')\right]\right|\right|^2\bigg]}_{\mathrm{Consistency \ Loss}},
        \label{eq:ours_obj_with_consistency_main_paper}
    \end{gather}
    where $\veta$ in the above is a standard Gaussian vector. 
 Suppose that the family of functions $\{\vh_\theta\}$ is rich enough to contain the minimizer  of the above objective overall functions $\vh(\vx,t)$. Then the minimizer $\theta^*$ satisfies: \begin{gather}
    \vh_{\theta^*}(\vx_t, t) = \mathbb E[\vX_0 | \vX_t=\vx_t], \quad \forall \vx_t,t.
\end{gather}
\label{th:main_theorem}
\end{theorem}
\vspace{-1em}
The formal statement and the proof of this Theorem is given in the Appendix (see Theorem \ref{th:main_theorem_formal_statement}).

\paragraph{Intuition and Proof Overview.} It is useful to build some intuition about how this objective works. There are two terms in the loss: i) the Ambient Score Matching term and ii) the Consistency Loss. The Ambient Score Matching term regards only noise levels $t: \sigma_t > \sigma_{t_n}$. Per Theorem \ref{th:sure_alt}, this term has a unique minimizer that is the optimal denoiser for all levels $t: \sigma_t > \sigma_{t_n}$. The consistency term in the loss, penalizes for violations of the Consistency Property (see Definition \ref{def:consistency}) for all pairs of times $t, t'$. The desired solution, $\vh(\vx_t,t)=\E[\vX_0 | \vX_t=\vx_t], \ \forall t,\vx_t$, minimizes the first term and makes the second term $0$, since it corresponds to a consistent denoiser. Hence, the desired solution is an optimal solution for the objective we wrote and the question becomes whether this solution is unique. The uniqueness of the solution arises from the Fokker-Planck PDE that describes the evolution of density: there is unique extension to a function that is $\E[\vX_0 | \vX_t=\vx_t], \ t: \sigma_t > \sigma_{t_n}$ and is consistent for all $t$. The latter result comes from Theorem 3.2 in Consistent Diffusion Models~\citep{daras2023consistent}.

\paragraph{Implementation Trade-offs and Design Choices.} When it comes to implementing the Consistency Loss there are trade-offs that need to be considered. First, we need to run partially the sampling chain. Doing so at every training step can lead to important slow-downs, as explained in~\citet{daras2023consistent}. To mitigate this, we choose the times $t', t''$ to be very close to one another, as in Consistent Diffusion, using a uniform distribution with support of width $\epsilon$. This helps us run only $1$ step of the sampling chain (without introducing big discretization errors) and it works because local consistency implies global consistency. Second, for the inner-term in the consistency loss we need to compute an expectation over samples of $p_{\theta}$. To avoid running the sampling chain many times during training, we opt for an unbiased estimator of this term that uses only two samples, following the implementation of~\citet{daras2023consistent}. Specifically, we use the approximation:
\begin{gather*}
    \left|\left|\vh_{\theta}(\vx_{t'}, t') - \E_{\vx_{t''} \sim p_{\theta}(\vx_{t''}, t'' | \vx_{t'}, t')}\left[\vh_{\theta}(\vx_{t''}, t')\right]\right|\right|^2 \nonumber \\ \approx 
    (\vh_{\theta}(\vx_{t''}^1, t'') -  \vh_{\theta}(\vx_{t'}, t'))^T (\vh_{\theta}(\vx_{t''}^2, t'') -  \vh_{\theta}(\vx_{t'}, t'')),
\end{gather*}
where $\vx_{t''}^1, \vx_{t''}^2$ are samples from $p_{\theta}( \cdot | \vx_t', t')$.
We finally note that our Consistency Loss defined in Eq. \ref{eq:ours_obj_with_consistency_main_paper} involves three expectations, instead of two, as in the original definition of \citet{daras2023consistent}. This is because the consistency property needs to hold for all pairs of times $(t', t'')$ and for $t' > t_n$ we don't have direct access to samples, i.e. we have to use the model to sample them given $\vx_t$.

\subsection{Testing Training Data Replication}
\label{sec:privacy_attack}
Learning from corrupted data is a potential mitigation strategy for the problem of training data replication. Thus, we need effective ways to evaluate the degree to which our models (and baselines trained on clean data) memorize. 

A standard approach is to generate a few thousand samples with the trained models (potentially using the dataset prompts) and then measure the similarities of the generated samples with their nearest neighbors in the dataset~\citep{somepalli2022diffusion, daras2023ambient}. This approach is known to ``systematically underestimate the amount of replication in Stable Diffusion and other models'', as noted by \citet{somepalli2022diffusion}.

We propose a novel attack that shows that diffusion models memorize their training sets at a higher rate than previously known. We use the trained diffusion priors to solve inverse problems at extremely high corruption levels and we show that the reconstructions are often almost perfect as long as the uncorrupted images belong to the training set.

\section{Experimental Evaluation}
\label{sec:experiments}

\subsection{Experiments with pre-trained models}
\label{sec:exp_privacy_attack}
In this section, we measure how much pre-trained foundation diffusion models memorize data from their training set. We perform our experiments with Stable Diffusion XL~\citep{podell2023sdxl} (SDXL), as it is the state-of-the-art open-source image generation diffusion model.

We take a random $10,000$ image subset of LAION and we corrupt it severely. We consider two models of corruption. In the first model, we take the LAION images and mask significant portions of them, as shown in Figure \ref{fig:sdxl_inp_attack}. The masked regions are selected automatically, using a YOLO object detection network~\citep{yolo}, to contain whole faces or large objects that are impossible to perfectly predict by only observing the non-masked content of the image. Yet, as seen in the last row of Figure \ref{fig:sdxl_inp_attack}, some posterior samples are almost pixel-perfect matches of the original images. This strongly indicates that the images in the top row of Figure \ref{fig:sdxl_inp_attack} were in the training set of SDXL and have been memorized.
Its important to note that the captions (from the LAION dataset) are entered as input in the inpainting model and this attack did not seem to work with null captions.

In the second corruption model, we encode LAION images with the SDXL encoder and we add a significant amount of noise to them. In Figure \ref{fig:sdxl_noise_attack}, we show images from LAION dataset, their encodings (visualizing them as $3$-channel RGB images), the noisy latents, the MMSE reconstruction (using the model's one-step prediction at the noise level of the corruption) and posterior samples from the model. Again, even if the corruption is severe and the MMSE denoised images are very blurry, the posterior samples from the model are very close to the original images from the dataset, indicating potential memorization.
In this corruption model the near-duplicate reconstructed images were obtained with null captions, so no text guidance was needed.

\begin{figure}[!htp]
    \centering
    \includegraphics[width=0.5\textwidth]{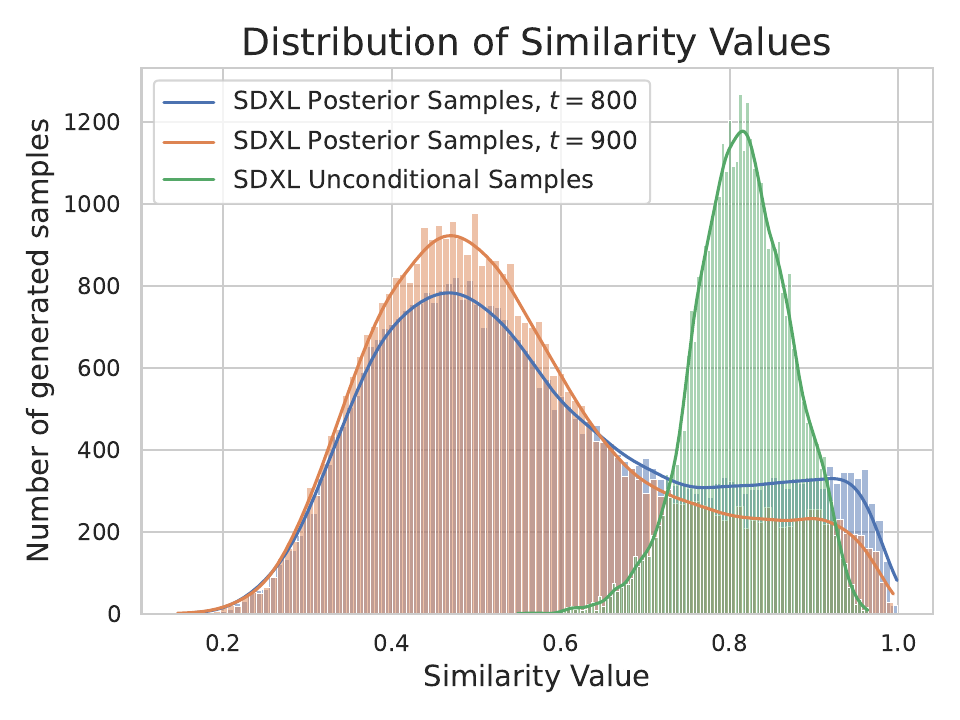}
    \caption{Distribution of image similarities of generated images with their nearest neighbors in the dataset for: i) the \citet{somepalli2022diffusion} method, and ii) for our noising method for two different noise levels. As shown, the fraction of images with similarities above 0.95 (near-identical to training set) is much higher for our method compared to the \citet{somepalli2022diffusion} baseline. 
    }
    \label{fig:sdxl_noise_memorization_curves}
\end{figure}

To quantify the degree of memorization and detect replication automatically, we adapt the methodology of \citet{somepalli2022diffusion}. In this work, the authors embed both the generated images and the dataset images to the DINOv2~\citep{oquab2023dinov2} latent space, and for each generated image compute its maximum inner product (similarity score) with its nearest neighbor in the dataset. We repeat this experiment, previously done for Stable Diffusion v1.4, for the latest SDXL model. We empirically find that similarities above $0.95$ correspond to almost identical samples to the ones in the training set and similarities above $0.9$ correspond to close matches.  We compare the distribution obtained using the \citet{somepalli2022diffusion} method with the distribution obtained using our noising approach (for two different noise levels) in Figure \ref{fig:sdxl_noise_memorization_curves}. As shown, our approach finds significantly more examples that have similarity values close to $1$. Also as mentioned, our attack did not need the prompts in this case. This is not necessarily surprising since our approach uses more information (the noisy latents) compared to the previously proposed method that only uses the prompts. 
Still, our results present evidence that diffusion models memorize significantly more training data compared to what was previously known.
For the inpainting case, we only compute embeddings for the infilled regions and hence the similarity numbers are not directly comparable. We present these results in Figure \ref{fig:sdxl_inp_memorization_curves} in the Appendix. 
\begin{figure}[h]
    \centering
    \includegraphics[width=0.5\textwidth]{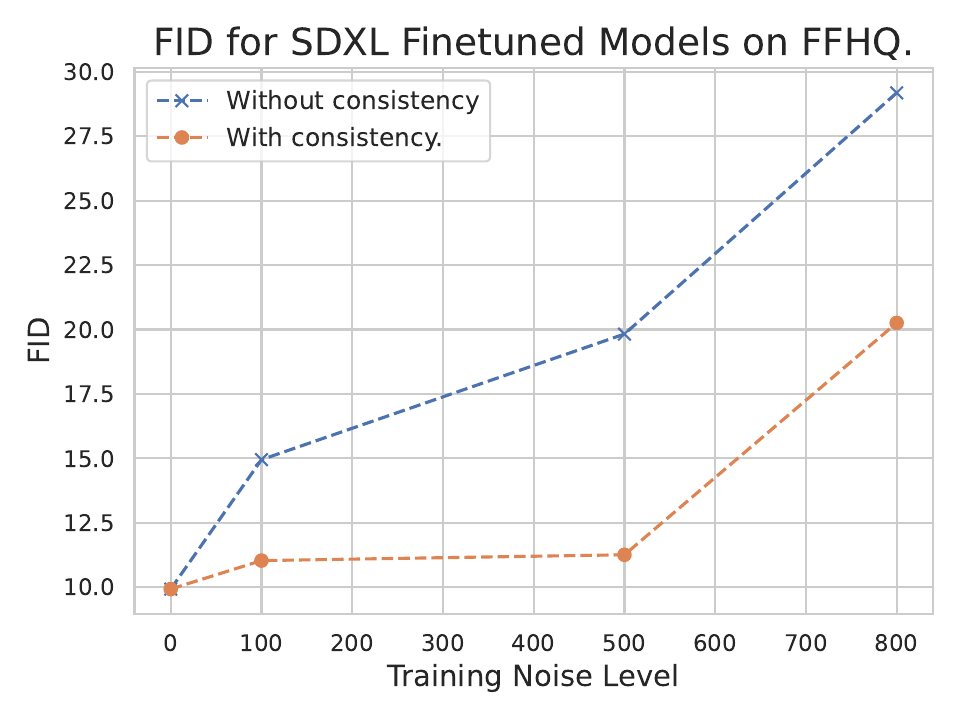}
    \caption{FID results for SDXL finetuned models, with and without consistency, on FFHQ, as we change the corruption level. The performance of models trained without consistency deteriorates significantly as we increase the corruption. Models trained with consistency maintain comparable performance to the baseline model (trained on clean data) for noise levels up to $t_n=500$.}
    \label{fig:fid_scores}
\end{figure}

\begin{table*}[!htp]
\centering
\begin{tabular}{c|c|c|c|c}
\hline
Noise Level Eval & Noise Level Training & Latent MSE & Pixel MSE \\
\hline
\multirow{3}{*}{900} & 100 & \textbf{0.3369} (0.0521) & 0.0802 (0.0257) \\
& 500 & 0.3377 (0.0514) & 0.0804 (0.0253) \\
& 800 &  0.3375 (0.0514) & \textbf{0.0796} (0.0254) \\
\hline
\multirow{3}{*}{800} & 100 & \textbf{0.2974} (0.0463) & \textbf{0.0566} (0.0219) \\
& 500 & 0.2978 (0.0464) & 0.0566 (0.0222) \\
& 800 &  0.3001 (0.0466) & 0.0570 (0.0220) \\
\hline
\multirow{3}{*}{500} & 100 & \textbf{0.2153} (0.0283) & \textbf{0.0219} (0.0092) \\
& 500 &  0.2159 (0.0283) & 0.0221 (0.0092) \\
& 800 &  0.2182 (0.0284) & 0.0226 (0.0094) \\
\hline
\multirow{3}{*}{100} & 100 & \textbf{0.0405} (0.0029) & \textbf{0.0068} (0.0027) \\
& 500 &  0.0409 (0.0029) & 0.0069 (0.0028) \\
& 800 &  0.0411 (0.0029) & 0.0070 (0.0028) \\
\hline
\end{tabular}
\caption{Restoration performance of models trained with noisy data at different noise levels. All the models have comparable performance, irrespective of the noise level of the dataset they were trained with.}
\label{tab:denoising_eval}
\end{table*}

\subsection{Finetuning Stable Diffusion XL}
\begin{figure}[!htp]
    \centering
    \begin{subfigure}{0.5\textwidth}
        \includegraphics[width=\textwidth]{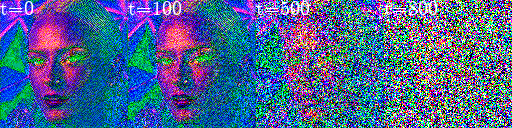}
        \caption{\small Noisy Latents for $t_n \in \{0, 100, 500, 800\}$.}
    \end{subfigure}
    \begin{subfigure}{0.5\textwidth}
        \includegraphics[width=\textwidth]{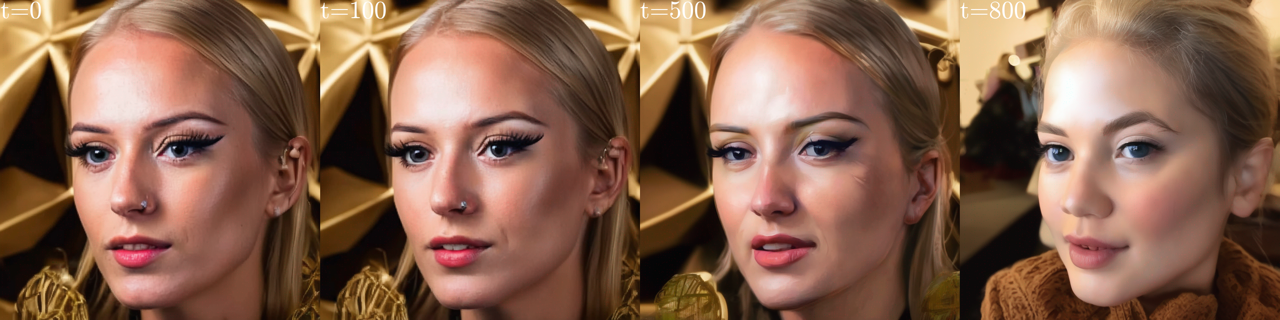}
        \caption{\small Posterior samples for $t_n \in \{0, 100, 500, 800\}$.}
    \end{subfigure}
    \caption{Visualization of the noise levels considered in the paper. The top row shows noisy latents, visualized as RGB images. The bottom row shows posterior samples obtained by the SDXL~\citep{podell2023sdxl} model given these noise latents.}
    \label{fig:noise_levels_vis}
\end{figure}

The next step is to use our framework, detailed in Sections \ref{subsec:double_tweedie}, \ref{subsec:consistency}, to finetune SDXL on corrupted data. We finetune our models on FFHQ, at $1024\times 1024$ resolution, since it is a standard benchmark for image generation. Given that SDXL is a latent model, we first encode the clean images using the SDXL encoder and then we add noise to the latents. We consider four noise levels which we will be referring to as: i) noiseless, $t_n=0$, $\sigma_{t_n}=0$, ii) low-noise, $t_n=100$, $\sigma_{t_n}=0.325$, iii) medium-noise, $t_n=500$, $\sigma_{t_n}=0.850$, and, iv) high-noise,  $t_n=800$, $\sigma_{t_n}=0.981$. For reference, we fix an image from the training set and we visualize posterior samples for each one of the noise levels in Figure \ref{fig:noise_levels_vis}. We train models with our Ambient Denoising Score Matching loss, with and without consistency. We provide the training details in the Appendix, Section \ref{sec:experimental_details}.

We first evaluate the denoising performance of our models. To do so, we take $32$ evaluation samples from FFHQ, we add noise to levels $t_{\mathrm{eval}} \in \{900, 800, 500, 100\}$, we use our trained models to denoise and we measure the reconstruction error. Since SDXL is a latent diffusion model, the noise (and the denoising) happens in the latent space. Hence, the MSE reconstruction error can be measured directly in the latent space or pixel space (by decoding the reconstructed latents). We present our results in Table \ref{tab:denoising_eval}. As shown, all the models have comparable performance across all noise levels, irrespective of the noise level of the data they saw during training. This is in line with our theory: all the models are trained to estimate $\E[\vx_0 | \vx_t]$ for all levels $t$.

\begin{figure*}[!ht]
  \centering
  \begin{subfigure}{0.45\textwidth}
    \includegraphics[width=\linewidth]{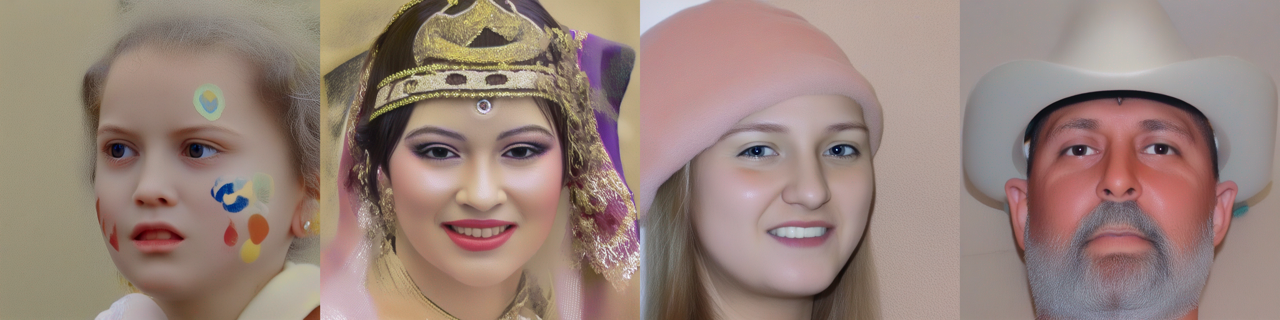}
    \caption{$t_n=0$.}
  \end{subfigure} \\
  \begin{subfigure}{0.45\textwidth}
    \includegraphics[width=\linewidth]{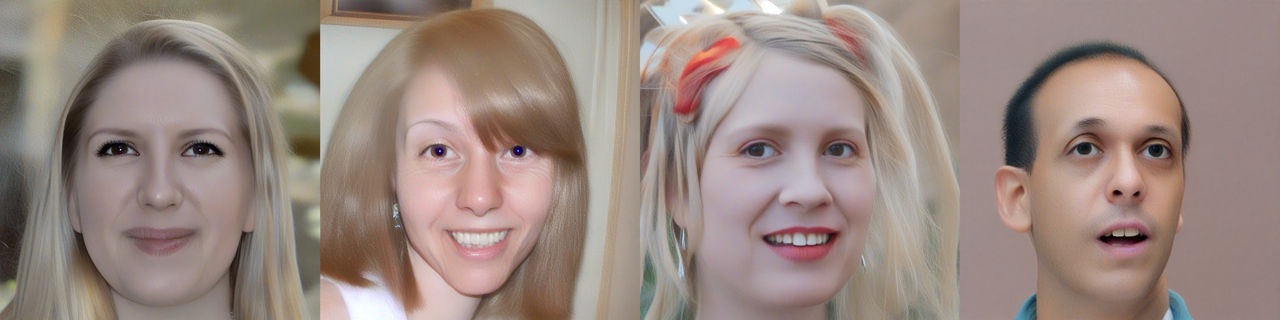}
    \caption{$t_n=100$, no consistency.}
  \end{subfigure}
  \begin{subfigure}{0.45\textwidth}
    \includegraphics[width=\linewidth]{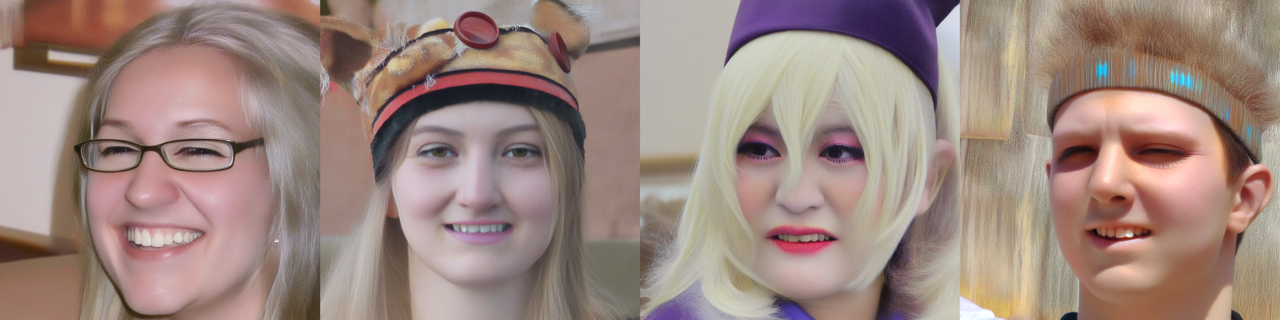}
    \caption{$t_n=100$, with consistency.}
  \end{subfigure} \\
  \begin{subfigure}{0.45\textwidth}
    \includegraphics[width=\linewidth]{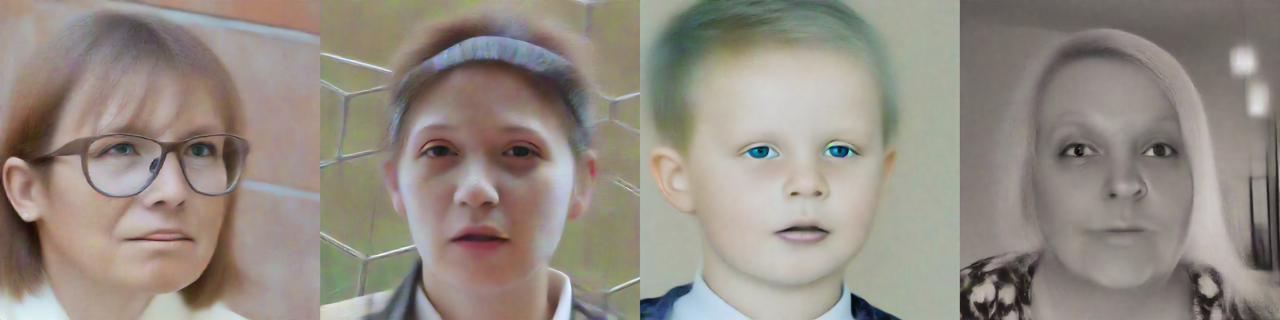}
    \caption{$t_n=500$, no consistency.}
  \end{subfigure} 
  \begin{subfigure}{0.45\textwidth}
    \includegraphics[width=\linewidth]{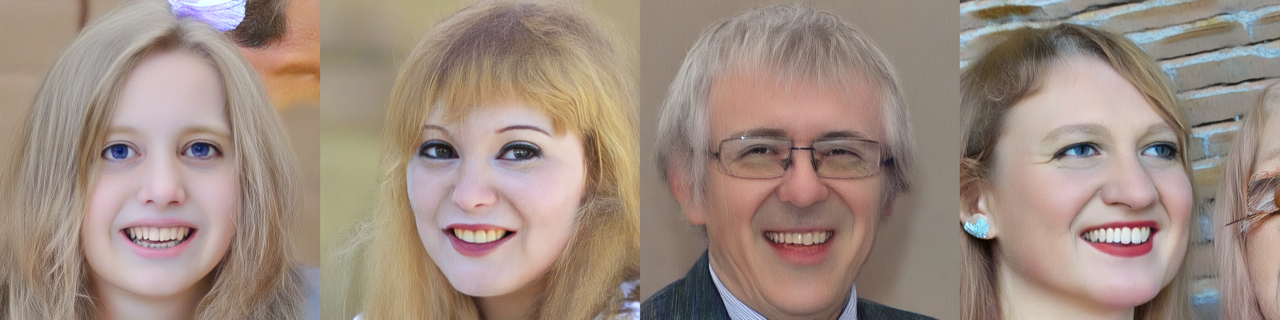}
    \caption{$t_n=500$, with consistency.}
  \end{subfigure}
  \\
  \begin{subfigure}{0.45\textwidth}
    \includegraphics[width=\linewidth]{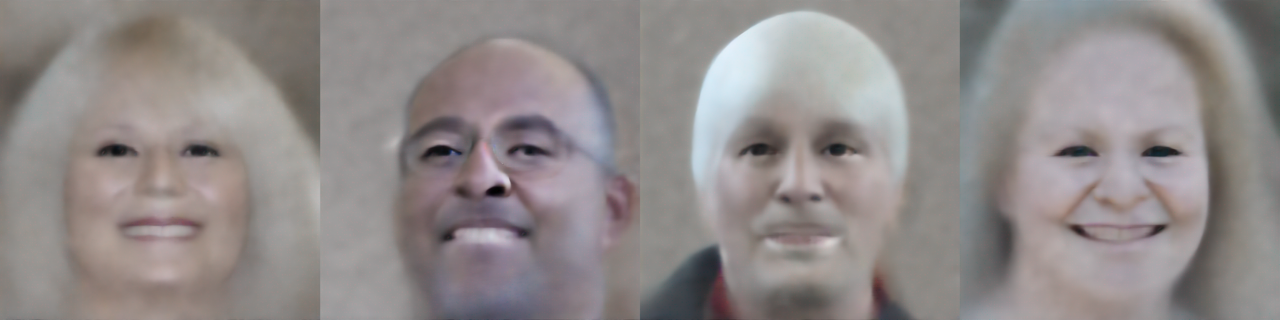}
    \caption{$t_n=800$, no consistency.}
  \end{subfigure}
  \begin{subfigure}{0.45\textwidth}
    \includegraphics[width=\linewidth]{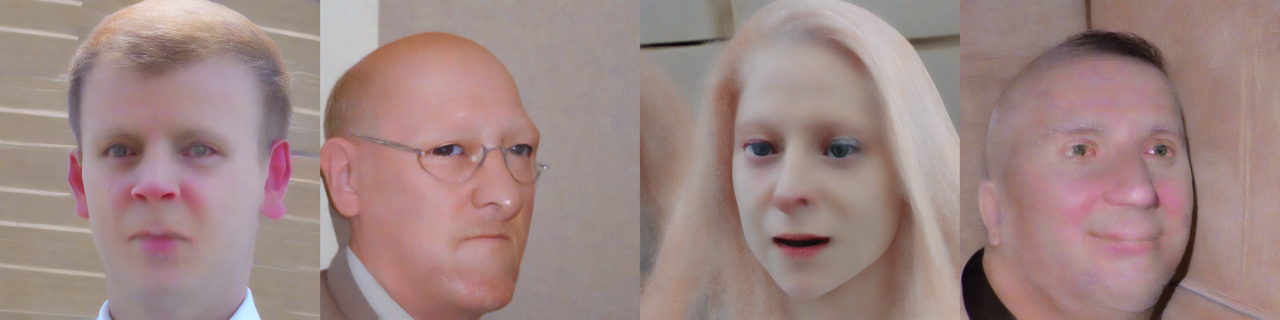}
    \caption{$t_n=800$, with consistency.}
  \end{subfigure}
  \caption{Unconditional generations for models trained with and without consistency at various noise levels $t_n$. Models trained without consistency lead to increasingly blurry generations as the noise level of the training data increases. Training with consistency recovers high-frequency details and leads to significantly improved images, especially for models trained on highly noisy data.}
  \label{fig:gens}
\end{figure*}

\begin{figure*}[!htp]
  \centering
  \begin{subfigure}{0.32\textwidth}
    \includegraphics[width=\linewidth]{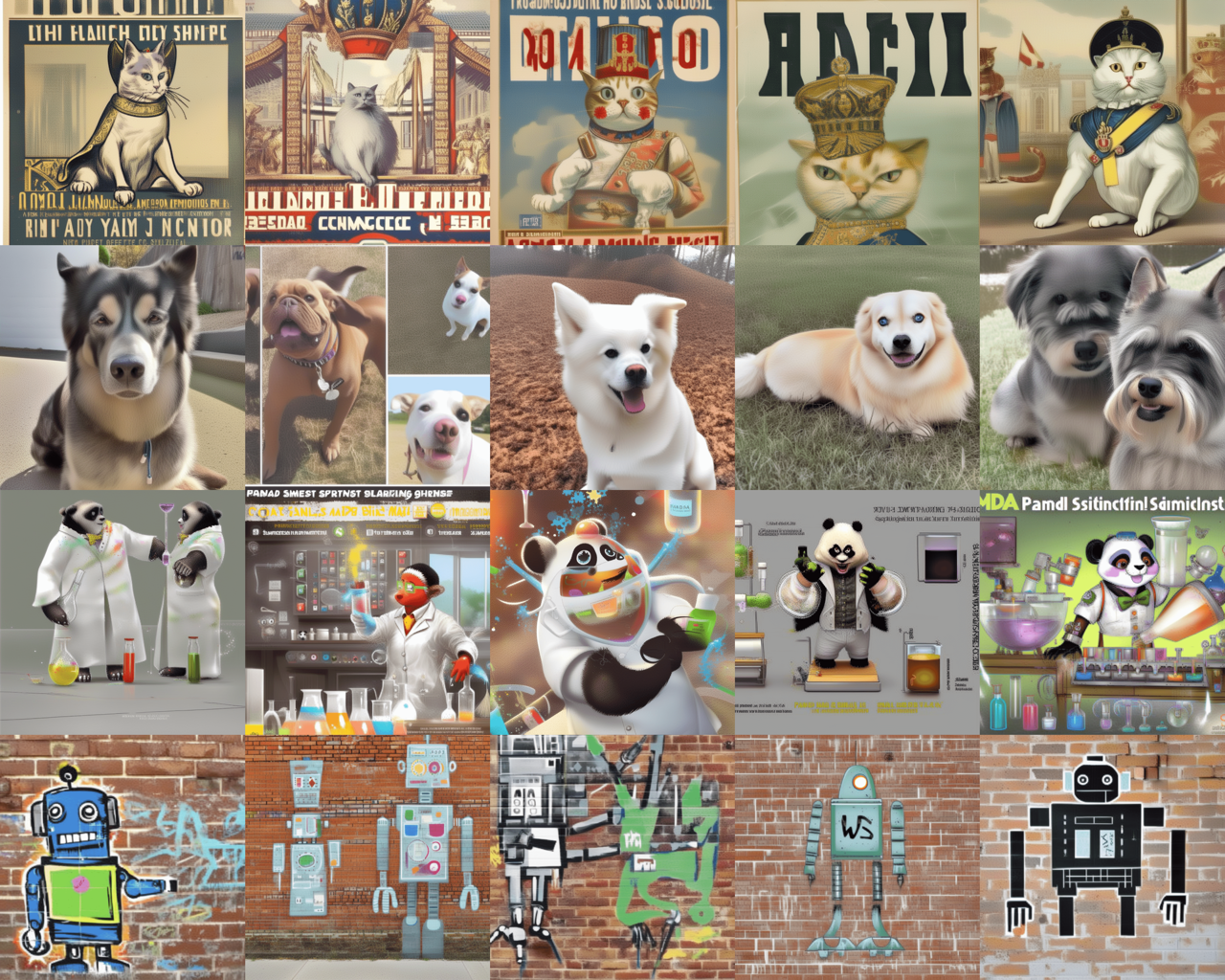}
    \caption{Finetuned model, no noise.}
  \end{subfigure}
  \begin{subfigure}{0.32\textwidth}
    \includegraphics[width=\linewidth]{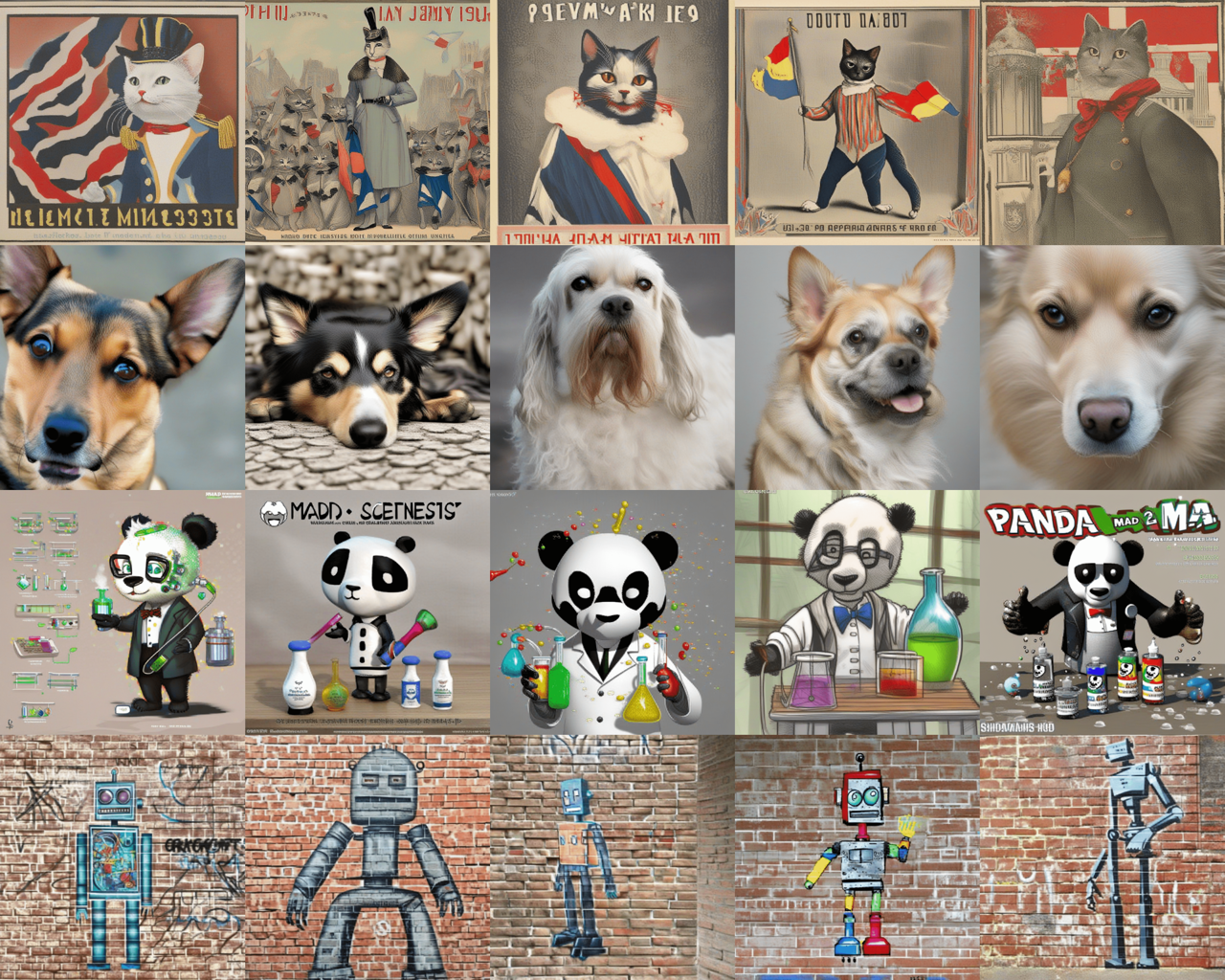}
    \caption{Finetuned model, $t_n=100$.}
  \end{subfigure}
  \begin{subfigure}{0.32\textwidth}  
    \includegraphics[width=\linewidth]{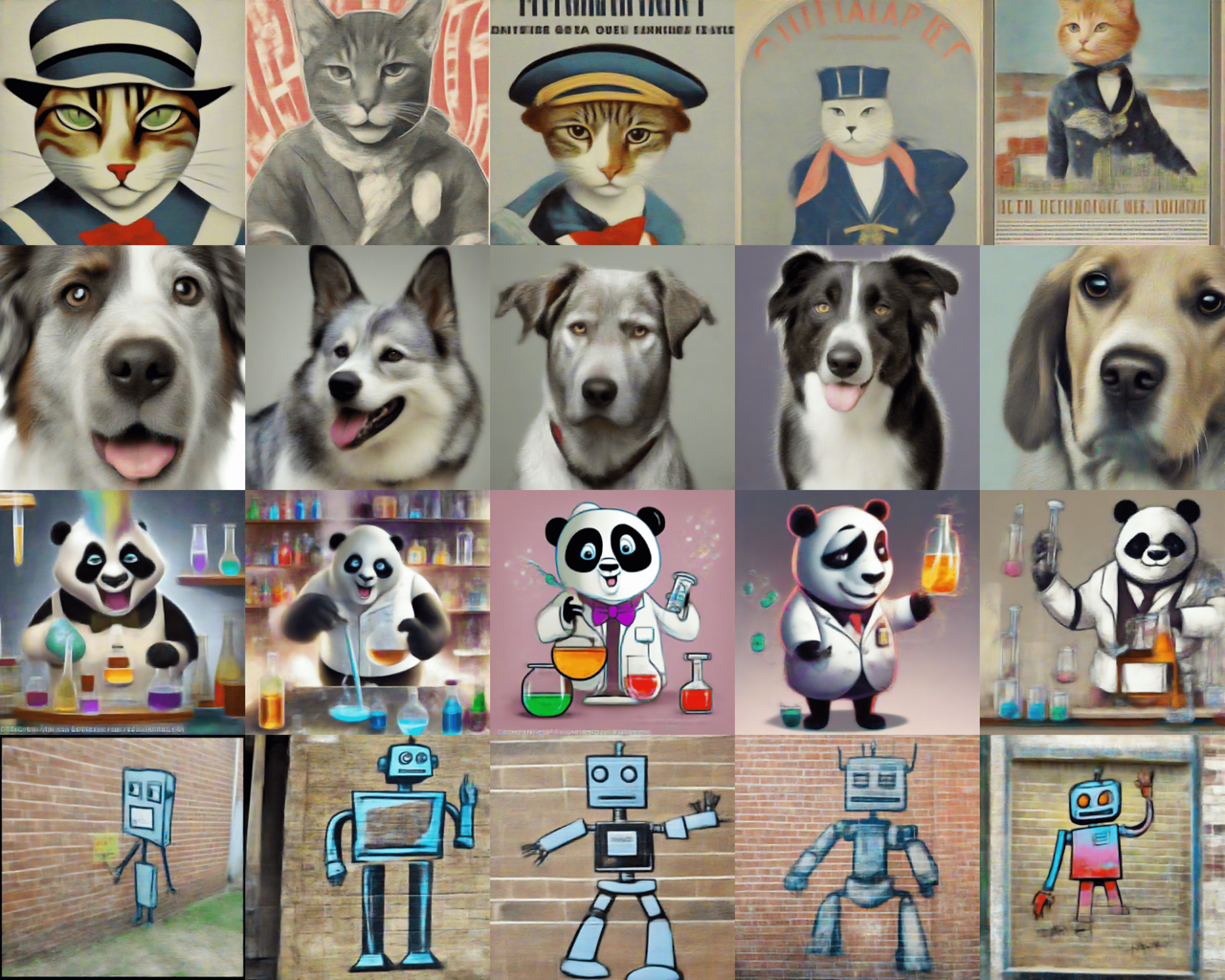}
    \caption{Finetuned model, $t_n=500$.}
  \end{subfigure}
  \caption{Generations of finetuned SDXL models on a 10k subset of LAION at different levels of noise in the training data. The following prompts were used: Row 1) ``A propaganda poster depicting a cat dressed as french emperor napoleon.'', Row 2) ``A high-quality image of a dog.'', Row 3) ``Panda mad scientist mixing sparkling chemicals, artstation.'', Row 4) ``A robot painted as graffiti on a brick wall.''.}
  \label{fig:finetuned_gens}
\end{figure*}

To understand better the role of consistency, we visualize unconditional samples from our models trained with and without consistency in Figure \ref{fig:gens}. As shown in the left column of Figure \ref{fig:gens}, models trained without consistency lead to increasingly blurry generations as the level of noise encountered during training increases. This is not surprising: as explained in Subsection \ref{subsec:prior_work_ambient_diffusion}, models trained without consistency sample from the distribution of MMSE denoised images, $\E[\vX_0 | \vX_{t_n}]$. As the noise level $t_n$ increases, these images become averaged and high-frequency detail is lost. As shown in the right column on Figure \ref{fig:gens}, training with consistency recovers high-frequency details and leads to significantly improved images, especially for models trained with highly noisy data ($t_n \in \{500, 800\}$).

We proceed to evaluate unconditional generation performance. For each of our models, we generate $50,000$ images and we compute the FID score. We visualize the performance of our models trained with and without consistency in Figure \ref{fig:fid_scores}. As shown, the performance of models trained without consistency deteriorates significantly as we increase the corruption. This is compatible with Figure \ref{fig:gens} (left) that shows that the generations become blurrier. Models trained on noisy data with consistency maintain comparable performance to the baseline model (trained on clean data) for noise levels up to $t_n=500$ and are better everywhere compared to their counterparts trained without consistency.

\subsection{Additional Finetuning Experiments}

We perform additional experiments to show that our framework can be used to fine-tune SDXL on datasets beyond FFHQ. We finetune SDXL on a 10k subset of LAION at different levels of training corruption and we show generations for different textual prompts at Figure \ref{fig:finetuned_gens}. As shown, even for high levels of training corruption, the model is capable of generating plausible images for arbitrary user prompts. 

To further show that our method can be used for data that follow a distribution significantly different to the training one, we finetune SDXL on a dataset of chest x-rays. In Figure \ref{fig:medical_gens}, we provide same samples of the training dataset (Row 1), generated samples without fine-tuning (Row 2), noisy samples that were used to fine-tune the model (Row 3), generated samples after fine-tuning without consistency (Row 4) and finally generated samples after fine-tuning with consistency. For all our generations, we use prompts from the dataset of interest. The generations of the model without fine-tuning are very different compared to the dataset samples, hinting that the model initially models a different distribution conditioned on the given prompt. After fine-tuning with noisy data, the generated samples are more closely related to the samples from the dataset. As we also observed in the rest of the experiments in this paper, consistency decreases the blurriness of the generated samples.

\subsection{Measuring Memorization of Finetuned Models}
\begin{figure}[!htp]
\centering
\includegraphics[width=0.5\textwidth]{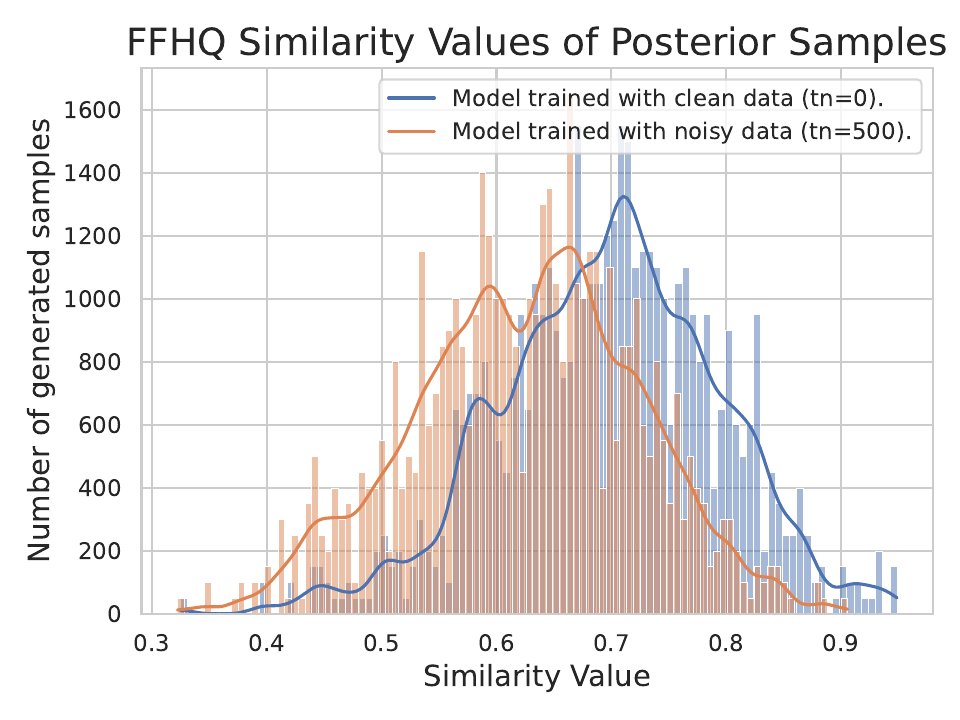}
    \caption{Distribution of similarities of posterior samples to their nearest neighbor in the dataset, given noisy latents (at $t=900$) for two models. The model trained with clean data (blue curve), has a distribution of similarity values that is more shifted to the right, indicating higher dataset memorization compared to the model trained with corrupted data (orange curve).}
    \label{fig:sdxl_noise_attack_finetuned}
\end{figure}
The final step in our experimental evaluation is to investigate to what extent training with noisy data reduced the rate of training data replication. To do so, we use the method we proposed in Section \ref{sec:exp_privacy_attack}. Specifically, we get the FFHQ training images, we encode them to the latent space of the SDXL Encoder and we add noise to them that corresponds to $t_n=900$. We then use the model trained with clean images and the model trained with data at $t_n=500$ noise level to perform posterior sampling, given the noisy latents. For each generated sample, we measure its DINO similarity to the nearest neighbor in the dataset. We plot the resulting distributions for the model trained with clean data and the $t_n=500$ model in Figure \ref{fig:sdxl_noise_attack_finetuned}. As shown, the model trained with clean data (blue curve), has a distribution of similarity values that is more shifted to the right, indicating higher dataset memorization compared to the model trained with corrupted data (orange curve). Finally, we once again compare with the method of~\citet{somepalli2022diffusion} for identifying training data replications. We use the model trained with clean data, we take the $50,000$ images that we used for FID generation and we compute their similarity to their nearest neighbor in the dataset. We compare with our approach in Figure \ref{fig:comp_our_vs_somepalli_finetuned} in the Appendix.

\section{Discussion and Other Related Work} The concurrent work of \citet{lu2024disguised} shows that an adversary can ``disguise'' copyrighted images in the training set. The implication is that training dataset inspection is not enough to detect whether copyrighted images have been used. This is finding conveys a similar message to our work since the training set might contain (severely corrupted) copyrighted images and pure inspection of the noisy images is not enough to determine if that's the case. Finally, we underline that the use of consistency enables sampling images below the observed data noise level, solving an open problem in the space of learning from corrupted data. For a more detailed exposition of diffusion models and consistency, we refer the interested reader to the relevant works of \citet{daras2023consistent, de2024target, boffi2023probability, shen2022self, albergo2023stochastic, lai2023equivalence, lai2023fp}.

\section{Conclusions, Limitations and Future Work}
\label{sec:conclusions}
We presented the first exact framework for training diffusion models to sample from an uncorrupted distribution using access to noisy data. We used our framework to finetune SDXL and we showed that training with corrupted data reduces memorization of the training set, while maintaining competitive performance. Our method has several limitations. First, it does not solve the problem of training diffusion models with \textit{linearly} corrupted data that provably sample from the uncorrupted distribution. Second, training with consistency increases the training time~\citep{daras2023consistent}. Finally, in some preliminary experiments on very limited datasets ($<100$ samples), the proposed Ambient Denoising Score Matching objective did not work. We plan to explore all these exciting open directions in future work.

\FloatBarrier
\pagebreak

\section*{Acknowledgements}
This research has been supported by NSF Grants AF 1901292, CNS 2148141, Tripods CCF 1934932, IFML CCF 2019844 and research gifts by Western Digital, Amazon, WNCG IAP, UT Austin Machine Learning Lab (MLL), Cisco and the Stanly P. Finch Centennial Professorship in Engineering. Constantinos Daskalakis has been supported by NSF Awards CCF-1901292, DMS-2022448 and DMS-2134108, a Simons Investigator Award, and the Simons Collaboration on the Theory of Algorithmic Fairness. Giannis Daras has been supported by the Onassis Fellowship (Scholarship ID: F ZS 012-1/2022-2023), the Bodossaki Fellowship and the Leventis Fellowship. Giannis Daras would also like to thank Yuval Dagan and Valentin De Bortoli for useful discussions.

\section*{Impact Statement}
This paper presents work whose goal is to advance the field of Machine Learning. There are many potential societal consequences of our work, none which we feel must be specifically highlighted here.

\bibliography{references}
\bibliographystyle{icml2024}

\newpage
\appendix
\onecolumn

\section{Theoretical Results}
In this section, we provide the proofs for the theoretical results of the main paper.
\subsection{Preliminaries}
We start by stating and proving a generalized version of Tweedie's formula that will be useful for learning the optimal denoisers, given only noisy data at $\sigma_{t_n}$, for noise levels higher than the level of the noise in the data, i.e. for $\sigma_t \geq \sigma_{t_n}$.

\begin{lemma}[Generalized Tweedie's Formula]
\label{lemma:Tweedies_general_appendix}
    Let:
    \begin{gather}
        \vX_t = \alpha_t \vX_0 + \sigma_t \vZ,
    \end{gather}
    for $\vX_0 \sim p_{0}$ and $\vZ \sim \mathcal N(\v0, I)$.
    Then,
    \begin{gather}
        \nabla_{\vx_t} \log p_t(\vx_t) = \frac{\alpha_t \E[\vX_0 | \vx_t] - \vx_t}{\sigma_t^2}.
    \end{gather}
\end{lemma}

\begin{proof}
    \begin{gather}
        \nabla_{\vx_t} \log p_t(\vx_t) = \frac{1}{p_t(\vx_t)}\nabla_{\vx_t} p_t(\vx_t) = \frac{1}{p_t(\vx_t)} \nabla_{\vx_t} \int p_t(\vx_t, \vx_0)\dx_0 \\
        = \frac{1}{p_t(\vx_t)} \nabla_{\vx_t} \int p_t(\vx_t | \vx_0) p_0(\vx_0)\dx_0 \\
        =  \frac{1}{p_t(\vx_t)}  \int \nabla_{\vx_t} p_t(\vx_t | \vx_0) p_0(\vx_0)\dx_0 \\
        = \frac{1}{p_t(\vx_t)}  \int p_t(\vx_t | \vx_0) \nabla_{\vx_t} \log p_t(\vx_t | \vx_0) p_0(\vx_0)\dx_0 \\
        =  \int p_0(\vx_0 | \vx_t) \frac{\alpha_t\vx_0 - \vx_t}{\sigma_t^2} \dx_0 \\
        = \frac{\alpha_t\E[\vX_0 | \vx_t] - \vx_t}{\sigma_t^2}.
    \end{gather}
\end{proof}

\subsection{Learning the Optimal Denoisers for $\sigma_t \geq \sigma_{t_n}$}

We will now use Lemma \ref{lemma:Tweedies_general} to connect the conditional expectation of $\vX_0$ given $\vX_{t}$, $\E[\vX_0 | \vX_t]$, to the conditional expectation of $\vX_{t_n}$ given $\vX_{t}$, $\E[\vX_{t_n}|\vX_t]$, for $t: \sigma_t \geq \sigma_{t_n}$. The latter can be learned with supervised learning and hence this connection will give us a way to learn how to find the best denoised image at level $t=0$ given only access to data at $t_{n}$.

\begin{lemma}[Connecting Conditional Expectations -- Variance Exploding]
    Let $\vX_{t_n} = \vX_0 + \sigma_{t_n}\vZ_1$ and $\vX_t = \vX_0 + \sigma_t \vZ_2, \quad \vZ_1, \vZ_2\sim \mathcal N(\v0, I)$ i.i.d. Then, for any $\sigma_t > \sigma_{t_n}$, we have that:
    \begin{gather}
        \E[\vX_0 | \vX_t = \vx_t] = \frac{\sigma_t^2}{\sigma_t^2 - \sigma_{t_n}^2}\E[\vX_{t_n} | \vX_t=\vx_t] - \frac{\sigma_{t_n}^2}{\sigma_t^2- \sigma_{t_n}^2}\vx_t.
    \end{gather}
    \label{lemma:cond_expectations_lemma_ve}
\end{lemma}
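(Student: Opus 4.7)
The plan is to apply the generalized Tweedie's formula (Lemma~\ref{lemma:Tweedies_general}) twice to the same random variable $\vX_t$, viewed through two different stochastic representations, and then equate the two resulting expressions for $\nabla_{\vx_t} \log p_t(\vx_t)$ to solve for $\E[\vX_0 \mid \vX_t = \vx_t]$.

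First, I would apply Lemma~\ref{lemma:Tweedies_general} directly, taking $\alpha_t = 1$ and treating $\vX_0$ as the clean signal, to obtain the standard form
\begin{equation*}
\nabla_{\vx_t} \log p_t(\vx_t) = \frac{\E[\vX_0 \mid \vX_t = \vx_t] - \vx_t}{\sigma_t^2}.
\end{equation*}
For the second application, I would re-express $\vX_t$ as $\vX_t = \vX_{t_n} + \sqrt{\sigma_t^2 - \sigma_{t_n}^2}\,\vZ'$, where $\vZ'$ is a standard Gaussian independent of $\vX_{t_n}$; this is the canonical sequential-noising coupling that preserves the marginal law of $\vX_t$. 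Applying Lemma~\ref{lemma:Tweedies_general} to this representation (now with $\vX_{t_n}$ as the ``clean'' signal and noise standard deviation $\sqrt{\sigma_t^2 - \sigma_{t_n}^2}$) gives
\begin{equation*}
\nabla_{\vx_t} \log p_t(\vx_t) = \frac{\E[\vX_{t_n} \mid \vX_t = \vx_t] - \vx_t}{\sigma_t^2 - \sigma_{t_n}^2}.
\end{equation*}
Equating these two expressions and solving the resulting linear equation for $\E[\vX_0 \mid \vX_t = \vx_t]$ is a routine algebraic manipulation that yields the claimed identity.

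The main subtlety lies in justifying the second representation. Although the lemma's statement uses independent noises $\vZ_1$ and $\vZ_2$, the second invocation of Tweedie's formula requires that the increment $\vX_t - \vX_{t_n}$ be independent of $\vX_{t_n}$, i.e.\ the sequentially-noised coupling rather than the conditionally-independent one. This is legitimate because $\nabla_{\vx_t} \log p_t(\vx_t)$ depends only on the marginal law of $\vX_t$, which is the same under either coupling, and the conditional expectation $\E[\vX_{t_n} \mid \vX_t = \vx_t]$ that we will actually estimate (by denoising score matching from noisy data to further-noised iterates, as in Theorem~\ref{th:sure_alt}) is precisely the one defined by the sequentially-noised coupling. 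Once that coupling is fixed, both Tweedie applications are immediate and the rest is algebra.
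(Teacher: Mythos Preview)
Your proposal is correct and follows essentially the same route as the paper: apply Tweedie's formula once to $\vX_t = \vX_0 + \sigma_t \vZ$ and once to $\vX_t = \vX_{t_n} + \sqrt{\sigma_t^2 - \sigma_{t_n}^2}\,\vZ'$, equate the two score expressions, and solve for $\E[\vX_0 \mid \vX_t = \vx_t]$. Your explicit remark about the coupling subtlety (the sequential-noising coupling is needed so the increment is independent of $\vX_{t_n}$, while the marginal of $\vX_t$ is unchanged) is a point the paper leaves implicit, so your write-up is in fact slightly more careful on that front.
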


\begin{proof}
Applying Tweedie's formula (Lemma \ref{lemma:Tweedies_general}) for the pair $\vX_t, \vX_0$ we have:

\begin{gather}
        \nabla \log p_t(\vx_t) = \frac{\E[\vX_0 | \vx_t] - \vx_t}{\sigma_t^2}.
        \label{eq:tweedies_1}
\end{gather}
But also, $\vX_t = \vX_{t_n} + \sqrt{\sigma_t^2 - \sigma_{t_n}^2}\vZ$. Applying again Tweedie's formula for $\vX_t, \vX_{t_n}$ we get:
\begin{gather}
    \nabla \log p_t(\vx_t) = \frac{\E[\vX_{t_n} | \vx_t] - \vx_t}{\sigma_t^2 - \sigma_{t_n}^2}.
    \label{eq:tweedies_2}
\end{gather}

From \ref{eq:tweedies_1}, \ref{eq:tweedies_2}, we get:
    \begin{gather}
        \E[\vX_0 | \vX_t = \vx_t] = \frac{\sigma_t^2}{\sigma_t^2 - \sigma_{t_n}^2}\E[\vX_{t_n} | \vX_t=\vx_t] - \frac{\sigma_{t_n}^2}{\sigma_t^2- \sigma_{t_n}^2}\vx_t.
    \end{gather}
\end{proof}

We are now ready to present the proof of Theorem \ref{th:sure_alt}. For completeness, we restate the theorem here.
\begin{theorem}[Ambient Denoising Score Matching; restated Theorem~\ref{th:sure_alt}]
Define $\vX_t$ as in the beginning of Section~\ref{sec:background}. Suppose we are given samples $\vX_{t_n} = \vX_0 + \sigma_{t_n}\vZ$, where $\vX_0 \sim p_0$ and $\vZ \sim \mathcal N(\v0, I)$.
Consider the following objective:
    \begin{gather}
        J(\theta) = {\E_{\vx_{t_n}}\E_{t \sim \mathcal U(t_n, T]}\E_{\vx_t = \vx_{t_n} + \sqrt{\sigma_t^2 - \sigma_{t_n}^2} \veta}\left[ \left|\left|\frac{(\sigma_{t}^2 - \sigma_{t_n}^2)}{\sigma_t^2}\vh_{\theta}(\vx_t, t)+ \frac{\sigma_{t_n}^2}{\sigma_t^2}\vx_t - \vx_{t_n}\right|\right|^2\right]}, \nonumber
    \end{gather}
    where $\veta$ in the above is a standard Gaussian vector. 
 Suppose that the family of functions $\{\vh_\theta\}$ is rich enough to contain the minimizer  of the above objective overall functions $\vh(\vx,t)$. Then the minimizer $\theta^*$ of $J$ satisfies: 
\begin{gather}
    \vh_{\theta^*}(\vx_t, t) = \mathbb E[\vX_0 | \vX_t=\vx_t], \quad \forall \vx_t, t > t_n.
\end{gather}
\end{theorem}

\begin{proof}
    We start by using the definition of conditional expectation as the unique minimizer of the mean squared error objective. Specifically, we know that the solution to the optimization problem:
    \begin{gather}
        \tilde{J}(\theta) = \E_{\vx_{t_n}}\E_{t \sim \mathcal U(t_n, T]}\E_{\vx_t = \vx_{t_n} + \sqrt{\sigma_t^2 - \sigma_{t_n}^2} \veta}\left[ \left|\left|\vg_{\theta}(\vx_t, t) - \vx_{t_n}\right|\right|^2\right],
    \end{gather}
    for a rich enough family of functions $\vg_\theta$ is $\vg_{\theta^*}(\vx_t, t) = \E[\vX_{t_n} | \vX_t = \vx_t]$. We can parametrize $\vg_{\theta}$ as $\vg_{\theta}(\vx_t, t) = \frac{\sigma_t^2 - \sigma_{t_n}^2}{\sigma_{t}^2}\vh_{\theta}(\vx_t, t) + \frac{\sigma_{t_n}^2}{\sigma_t^2}\vx_t$ and solve the following optimization problem: 
    \begin{gather}
        \min_{\theta: \vg_{\theta}(\vx_t, t) = \frac{\sigma_t^2 - \sigma_{t_n}^2}{\sigma_{t}^2}\vh_{\theta}(\vx_t, t) + \frac{\sigma_{t_n}^2}{\sigma_t^2}\vx_t} \E_{\vx_{t_n}}\E_{t \sim \mathcal U(t_n, T]}\E_{\vx_t = \vx_{t_n} + \sqrt{\sigma_t^2 - \sigma_{t_n}^2} \veta}\left[ \left|\left|\vg_{\theta}(\vx_t, t) - \vx_{t_n}\right|\right|^2\right],
    \end{gather}
    which will have the same minimizer since any solution of $\tilde{J}(\theta)$ remains feasible. Hence,
    \begin{gather}
        \vg_{\theta^*}(\vx_t, t) = \E[\vX_{t_n} | \vX_t = \vx_t] \iff \\
        \frac{\sigma_t^2 - \sigma_{t_n}^2}{\sigma_{t}^2}\vh_{\theta^*}(\vx_t, t) + \frac{\sigma_{t_n}^2}{\sigma_t^2}\vx_t = \E[\vX_{t_n} | \vX_t = \vx_t].
    \end{gather}
    Using Lemma \ref{lemma:cond_expectations_lemma_ve}, the latter implies that $\vh_{\theta^*}(\vx_t, t) = \E[\vx_0 | \vX_t =\vx_t]$, as needed.
\end{proof}

\subsection{Extensions to Variance Preserving Diffusion}

The results that we presented can be easily extended to the Variance Preserving~\citep{ncsnv3} case, where the observed data are:
\begin{gather}
    \vX_{t_n} = \sqrt{1 - \sigma_{t_n}^2}\vX_0 + \sigma_{t_n}\vZ, \quad 0 < \sigma_{t_n} < 1.
\end{gather}

We first extend Lemma \ref{lemma:cond_expectations_lemma_ve}.

\begin{lemma}[Connecting Conditional Expectations -- Variance Preserving]
    Let $\vX_{t_n} = \sqrt{1 - \sigma_{t_n}^2}\vX_0 + \sigma_{t_n}\vZ_1, \quad 0 < \sigma_{t_n} < 1,$ and $\vX_t = \sqrt{1 - \sigma_t^2}\vX_0 + \sigma_t \vZ_2, \quad \vZ_1, \vZ_2\sim \mathcal N(\v0, I)$ i.i.d. Then, for any $1 > \sigma_t > \sigma_{t_n}$, we have that:
    \begin{gather}
        \E[\vx_0 | \vX_t=\vx_t] = \frac{\sigma_t^2}{\sigma_t^2 - \sigma_{t_n}^2}\sqrt{1 - \sigma_{t_n}^2}\E[\vx_{t_n} | \vX_t=\vx_t] - \sigma_{t_n}^2\frac{\sqrt{1 - \sigma_t^2}}{\sigma_t^2 - \sigma_{t_n}^2}\vx_t.
    \end{gather}
    \label{lemma:cond_expectations_lemma_vp}
\end{lemma}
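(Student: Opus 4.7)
The plan is to mirror the proof of Lemma~\ref{lemma:cond_expectations_lemma_ve} by applying the Generalized Tweedie's Formula (Lemma~\ref{lemma:Tweedies_general}) twice to two equivalent representations of $\vX_t$ --- once in terms of $\vX_0$ and once in terms of $\vX_{t_n}$ --- and then equating the two resulting expressions for the same score $\nabla \log p_t(\vx_t)$.

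First I would apply Lemma~\ref{lemma:Tweedies_general} directly to the representation $\vX_t = \sqrt{1-\sigma_t^2}\,\vX_0 + \sigma_t\,\vZ_2$ with $\alpha_t = \sqrt{1-\sigma_t^2}$, yielding
\begin{equation*}
\nabla \log p_t(\vx_t) = \frac{\sqrt{1-\sigma_t^2}\,\E[\vX_0 | \vx_t] - \vx_t}{\sigma_t^2}.
\end{equation*}
Next, I need a second representation of $\vX_t$ as an affine signal-plus-independent-Gaussian-noise function of $\vX_{t_n}$, so that Lemma~\ref{lemma:Tweedies_general} can be re-applied in the ``$\vX_{t_n}\to\vX_t$'' direction. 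Matching the coefficient of $\vX_0$ and the total noise variance forces the VP transition kernel
\begin{equation*}
\vX_t = \sqrt{\frac{1-\sigma_t^2}{1-\sigma_{t_n}^2}}\,\vX_{t_n} + \sqrt{\frac{\sigma_t^2-\sigma_{t_n}^2}{1-\sigma_{t_n}^2}}\,\vZ,
\end{equation*}
with $\vZ$ independent of $\vX_{t_n}$ (this is the standard VP forward-process kernel, and is exactly the joint under which training pairs $(\vx_{t_n},\vx_t)$ are sampled in Theorem~\ref{th:sure_alt}). A second application of Lemma~\ref{lemma:Tweedies_general}, now with $\alpha = \sqrt{(1-\sigma_t^2)/(1-\sigma_{t_n}^2)}$ and noise variance $(\sigma_t^2-\sigma_{t_n}^2)/(1-\sigma_{t_n}^2)$, then produces a second formula for $\nabla \log p_t(\vx_t)$, this time in terms of $\E[\vX_{t_n} | \vx_t]$.

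Equating the two expressions for the score and clearing the common denominators yields a single linear equation in $\E[\vX_0 | \vx_t]$, from which it can be isolated. The main --- and essentially the only --- obstacle is the algebraic bookkeeping at this last step: once $\E[\vX_0 | \vx_t]$ is solved for, two separate $\vx_t$-terms (one from the first Tweedie application, one carried over from the second) have to be merged, and the key identity that makes the coefficients collapse to the claimed form is $(\sigma_t^2-\sigma_{t_n}^2) - \sigma_t^2(1-\sigma_{t_n}^2) = -\sigma_{t_n}^2(1-\sigma_t^2)$. This cancellation reduces the coefficient of $\vx_t$ to $-\sigma_{t_n}^2\sqrt{1-\sigma_t^2}/(\sigma_t^2-\sigma_{t_n}^2)$ and the coefficient of $\E[\vX_{t_n} | \vx_t]$ to $\sigma_t^2\sqrt{1-\sigma_{t_n}^2}/(\sigma_t^2-\sigma_{t_n}^2)$, matching the stated identity exactly.
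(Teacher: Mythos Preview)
Your proposal is correct and follows essentially the same route as the paper: apply Lemma~\ref{lemma:Tweedies_general} to $\vX_t$ written in terms of $\vX_0$, derive the VP transition kernel $\vX_t=\sqrt{(1-\sigma_t^2)/(1-\sigma_{t_n}^2)}\,\vX_{t_n}+\sqrt{(\sigma_t^2-\sigma_{t_n}^2)/(1-\sigma_{t_n}^2)}\,\vZ$, apply Lemma~\ref{lemma:Tweedies_general} again, and equate the two score expressions. Your explicit identification of the cancellation $(\sigma_t^2-\sigma_{t_n}^2)-\sigma_t^2(1-\sigma_{t_n}^2)=-\sigma_{t_n}^2(1-\sigma_t^2)$ is in fact more detailed than the paper, which simply states the final identity after equating.
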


\begin{proof}
Applying Tweedie's formula (Lemma \ref{lemma:Tweedies_general}) for the pair $\vX_t, \vX_0$ we have:

\begin{gather}
        \nabla \log p_t(\vx_t) = \frac{\sqrt{1 - \sigma_t^2}\E[\vX_0 | \vx_t] - \vx_t}{\sigma_t^2}.
        \label{eq:tweedies_1_vp}
\end{gather}    
The next step is to express $\vX_{t}$ as a function of $\vX_{t_n}$. We want to find co-efficients $\alpha, \beta$ such that:
\begin{gather}
    \vX_{t} = \alpha\vX_{t_n} + \beta \vZ_2 \iff \\
    \vX_t = \alpha \sqrt{1 - \sigma_{t_n}^2}\vX_0 + \alpha\sigma_{t_n} \vZ_1 + \beta \vZ_2 \iff \\
    \vX_t = \alpha \sqrt{1 - \sigma_{t_n}^2}\vX_0 + \sqrt{\alpha ^2 \sigma_{t_n}^2 + \beta^2} \vZ.
\end{gather}
Since $\vX_t = \sqrt{1 - \sigma_t^2}\vX_0 + \sigma_t \vZ$, this implies that the values of the desired co-efficients are:
\begin{gather}
    \alpha = \sqrt{\frac{1 - \sigma_t^2}{1 - \sigma_{t_n}^2}}, \quad \beta = \sqrt{\frac{\sigma_t^2 - \sigma_{t_n}^2}{1 - \sigma_{t_n}^2}},
\end{gather}
i.e. $\vX_t$ can be expressed as:
\begin{gather}
    \vX_t = \sqrt{\frac{1 - \sigma_t^2}{1 - \sigma_{t_n}^2}}\vX_{t_n} + \sqrt{\frac{\sigma_t^2 - \sigma_{t_n}^2}{1 - \sigma_{t_n}^2}} \vZ_2.
\end{gather}

By applying Tweedie's formula again for the $\vX_t, \vX_{t_n}$ pair, we get that:
\begin{gather}
    \nabla \log p_t(\vx_t) = \frac{\sqrt{\frac{1 - \sigma_t^2}{1 - \sigma_{t_n}^2}} \E[\vX_{t_n} | \vX_t = \vx_t] - \vx_t}{\frac{\sigma_t^2 - \sigma_{t_n}^2}{1 - \sigma_{t_n}^2}} \iff \\
    \nabla \log p_t(\vx_t) = \frac{\sqrt{(1 - \sigma_t^2) (1 -\sigma_{t_n}^2)}\E[\vX_{t_n} | \vX_t = \vx_t] - (1 - \sigma_{t_n}^2) \vx_t}{\sigma_t^2 - \sigma_{t_n}^2}.
    \label{eq:tweedies_2_vp}
\end{gather}
Finally, by equating \ref{eq:tweedies_1_vp}, \ref{eq:tweedies_2_vp}, we find that:
\begin{gather}
\E[\vx_0 | \vX_t=\vx_t] = \frac{\sigma_t^2}{\sigma_t^2 - \sigma_{t_n}^2}\sqrt{1 - \sigma_{t_n}^2}\E[\vx_{t_n} | \vX_t=\vx_t] - \sigma_{t_n}^2\frac{\sqrt{1 - \sigma_t^2}}{\sigma_t^2 - \sigma_{t_n}^2}\vx_t.
\end{gather}
\end{proof}

We can now use this result to write an objective for the VP case that learns $\E[\vX_0 | \vX_{t}], \forall \ t: \sigma_t > \sigma_{t_n}$, as in Theorem \ref{th:sure_alt}.

\begin{theorem}[Ambient Denoising Score Matching -- VP case]
Let $\vX_{t_n} = \sqrt{1 - \sigma_{t_n}^2}\vX_0 + \sigma_{t_n}\vZ, \quad \vZ \sim \mathcal N(\v0, I)$ and $\vX_t = \sqrt{1 - \sigma_t^2}\vX_0 + \sigma_t \vZ, \ t: 1 > \sigma_t > \sigma_{t_n} > 0$. Then, the unique minimizer of the objective:
    \begin{gather}
        J(\theta) = \E_{\vx_{t_n}}\E_{t \sim \mathcal U(t_n, T]}\E_{\vx_t | \vx_{t_n}}\left[\left|\left| \frac{\sigma_t^2 - \sigma_{t_n}^2}{\sigma_t^2 \sqrt{1 - \sigma_{t_n}^2}}\vh_{\theta}(\vx_t, t)  + \frac{\sigma_{t_n}^2}{\sigma_t^2}\sqrt{\frac{1 - \sigma_t^2}{1 - \sigma_{t_n}^2} \vx_t} - \vx_{t_n} \right|\right|^2\right]
    \end{gather}
is: \begin{gather}
    \vh_{\theta^*}(\vx_t, t) = \mathbb E[\vx_0 | \vX_t=\vx_t], \quad \forall t: 1 > \sigma_t > \sigma_{t_n}.
\end{gather}
\label{th:sure_alt_vp}
\end{theorem}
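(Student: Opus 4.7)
The plan is to mirror the proof of Theorem~\ref{th:sure_alt} for the VE case, substituting the VP two-Tweedie identity (Lemma~\ref{lemma:cond_expectations_lemma_vp}) wherever the VE identity (Lemma~\ref{lemma:cond_expectations_lemma_ve}) was used. The first step is to invoke the standard characterization: over a rich enough function class, the unique minimizer of
\[
\tilde J(\theta)=\E_{\vx_{t_n}}\E_{t\sim\mathcal U(t_n,T]}\E_{\vx_t\mid \vx_{t_n}}\left[\left\|\vg_\theta(\vx_t,t)-\vx_{t_n}\right\|^2\right]
\]
is the conditional expectation $\vg_{\theta^*}(\vx_t,t)=\E[\vX_{t_n}\mid\vX_t=\vx_t]$. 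Before using this fact, I would verify that the sampling $\vx_t\mid\vx_{t_n}$ referenced in the theorem statement is consistent with the VP forward process: the computation inside the proof of Lemma~\ref{lemma:cond_expectations_lemma_vp} already shows $\vX_t=\sqrt{(1-\sigma_t^2)/(1-\sigma_{t_n}^2)}\,\vX_{t_n}+\sqrt{(\sigma_t^2-\sigma_{t_n}^2)/(1-\sigma_{t_n}^2)}\,\vZ_2$, so $\vx_t\mid\vx_{t_n}$ is just this Gaussian and the joint distribution of $(\vX_{t_n},\vX_t)$ is the correct one.

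Next, I would introduce the affine reparameterization
\[
\vg_\theta(\vx_t,t)\;:=\;\frac{\sigma_t^2-\sigma_{t_n}^2}{\sigma_t^2\sqrt{1-\sigma_{t_n}^2}}\,\vh_\theta(\vx_t,t)\;+\;\frac{\sigma_{t_n}^2}{\sigma_t^2}\sqrt{\frac{1-\sigma_t^2}{1-\sigma_{t_n}^2}}\,\vx_t,
\]
so that $J(\theta)$ is exactly $\tilde J$ evaluated on this $\vg_\theta$. For fixed $(\vx_t,t)$ with $\sigma_t>\sigma_{t_n}$, the map $\vh_\theta(\vx_t,t)\mapsto \vg_\theta(\vx_t,t)$ is a bijection (nonzero scalar times $\vh_\theta$ plus a fixed shift). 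Therefore, if $\{\vh_\theta\}$ is rich enough to realize every function $\vh(\vx,t)$, then $\{\vg_\theta\}$ is equally rich, and the minimizer $\theta^*$ of $J$ satisfies
\[
\frac{\sigma_t^2-\sigma_{t_n}^2}{\sigma_t^2\sqrt{1-\sigma_{t_n}^2}}\,\vh_{\theta^*}(\vx_t,t)+\frac{\sigma_{t_n}^2}{\sigma_t^2}\sqrt{\frac{1-\sigma_t^2}{1-\sigma_{t_n}^2}}\,\vx_t=\E[\vX_{t_n}\mid\vX_t=\vx_t].
\]

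The final step is to match this with Lemma~\ref{lemma:cond_expectations_lemma_vp}. Solving the lemma's identity for $\E[\vX_{t_n}\mid\vX_t=\vx_t]$ yields exactly the affine combination of $\E[\vX_0\mid\vX_t=\vx_t]$ and $\vx_t$ that appears above, so the equation is satisfied if and only if $\vh_{\theta^*}(\vx_t,t)=\E[\vX_0\mid\vX_t=\vx_t]$ for all $\vx_t$ and all $t$ with $1>\sigma_t>\sigma_{t_n}$. The hardest part is just the bookkeeping: one must carefully invert Lemma~\ref{lemma:cond_expectations_lemma_vp} and check that the two coefficient ratios $(\sigma_t^2-\sigma_{t_n}^2)/(\sigma_t^2\sqrt{1-\sigma_{t_n}^2})$ and $\sigma_{t_n}^2\sqrt{1-\sigma_t^2}/(\sigma_t^2\sqrt{1-\sigma_{t_n}^2})$ line up, but this is a routine algebraic verification rather than a genuine obstacle; no new probabilistic content is needed beyond the generalized Tweedie argument already used for the VE case.
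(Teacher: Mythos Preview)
Your proposal is correct and follows exactly the approach the paper intends: the paper omits the proof, stating only that it ``follows the same steps as the proof of Theorem~\ref{th:sure_alt}, with the only difference being that it invokes Lemma~\ref{lemma:cond_expectations_lemma_vp} instead of Lemma~\ref{lemma:cond_expectations_lemma_ve}.'' Your write-up does precisely this, and even supplies the explicit affine reparameterization and the inversion of Lemma~\ref{lemma:cond_expectations_lemma_vp} that the paper leaves to the reader.
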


The proof of this Theorem is skipped for brevity since it follows the same steps as the proof of Theorem \ref{th:sure_alt}, with the only difference being that it invokes Lemma \ref{lemma:cond_expectations_lemma_vp} instead of Lemma \ref{lemma:cond_expectations_lemma_ve}.

\subsection{Learning the Optimal Denoisers for $\sigma_t \leq \sigma_{t_n}$}

We are now ready to present the theory for learning the optimal denoisers for $\sigma_t \leq \sigma_{t_n}$.  The formal version of our main Theorem (\ref{th:main_theorem}) is given below.

\begin{theorem}[Main Theorem]
    Define $\vX_t$ as in the beginning of Section~\ref{sec:background}. Suppose we are given samples $\vX_{t_n} = \vX_0 + \sigma_{t_n}\vZ$, where $\vX_0 \sim p_0$ and $\vZ \sim \mathcal N(\v0, I)$. Let also $p_{\theta}(\vx_{t'}, t' | \vx_t, t)$ be the density of the sample $\vX_{t'}$ sampled by the stochastic diffusion process of Equation~\ref{eq:reverse_theta},
    at time $t'$ when initialized with $\vx_t$ at time $t > t'$. Consider the following objective:

    \begin{equation}
        J(\theta) = \overbrace{\E_{\vx_{t_n}}\E_{t \sim \mathcal U(t_n, T]}\E_{\vx_t = \vx_{t_n} + \sqrt{\sigma_t^2 - \sigma_{t_n}^2} \veta}\left[ \left|\left|\frac{\sigma_{t}^2 - \sigma_{t_n}^2}{\sigma_t^2}\vh_{\theta}(\vx_t, t)+ \frac{\sigma_{t_n}^2}{\sigma_t^2}\vx_t - \vx_{t_n}\right|\right|^2\right]}^{\mathrm{Ambient \ Score \ Matching}}
        \nonumber \\  + \underbrace{\E_{t \sim \mathcal U(t_n, T], t' \sim \mathcal U(0, t), t''\sim \mathcal U[t' -\epsilon, t')}\E_{\vx_{t}} \E_{\vx_{t'} \sim p_{\theta}(\vx_{t'}, t' | \vx_t, t)}\left[\left|\left|\vh_{\theta}(\vx_{t'}, t') - \E_{\vx_{t''} \sim p_{\theta}(\vx_{t''}, t'' | \vx_{t'}, t')}\left[\vh_{\theta}(\vx_{t''}, t'')\right]\right|\right|^2\right]}_{\mathrm{Consistency \ Loss}},
        \label{eq:ours_obj_with_consistency}
    \end{equation}
where $\veta$ in the above is a standard Gaussian vector, and subject to:
\begin{gather}
    \begin{cases}
        \textrm{(A1):}\quad \vh_{\theta}(\vx_0, 0) = \vx_0, \forall \vx_0;\\
        \textrm{(A2):}\quad \frac{\vh_{\theta}(\vx_t, t) - \vx_t}{\sigma_t^2} =  \nabla \Phi(\vx_t), \quad \textrm{for some scalar-valued function $\Phi$, $\forall t, \vx_t$}.
    \end{cases}
\end{gather}
 
 Suppose that the family of functions $\{\vh_\theta\}$ is rich enough to contain the minimizer  of the above objective overall functions $\vh(\vx,t)$. Then the  minimizer $\theta^*$ of $J$ satisfies:
\begin{gather}
    \vh_{\theta^*}(\vx_t, t) = \mathbb E[\vX_0 | \vX_t=\vx_t], \quad \forall \vx_t,t.
\end{gather}
\label{th:main_theorem_formal_statement}
\end{theorem}

\begin{proof}
    The first term of the loss involves predictions of the network only for $t: \sigma_t > \sigma_{t_n}$.
    By Theorem \ref{th:sure_alt}, for these times $t$, there is a unique minimizer and the solution should satisfy:
    \begin{gather}
        \vh_{\theta^*}(\vx_t, t) = \E[\vX_0 | \vX_t=\vx_t], \quad \forall t: \sigma_t > \sigma_{t_n}.
        \label{eq:first_term_min}
    \end{gather}
    The solution:
    \begin{gather}
        \vh_{\theta^*}(\vx_t, t) = \E[\vX_0 | \vX_t=\vx_t], \quad \forall t,
    \end{gather}
    is one minimizer of the loss since: i) it is a feasible solution (satisfies (A1), (A2)), ii) it satisfies the condition of \Eqref{eq:first_term_min} that corresponds to the minimization of the first term, and iii) it makes the second term of the loss $0$ (by the tower law of expectation). Hence, the only thing left to show is that the solution is unique for times $t: \sigma_t \leq \sigma_{t_n}$.

    Let $\vh_{\tilde \theta}$ be another optimal solution. It has to satisfy the following properties:
    \begin{enumerate}
        \item $\vh_{\tilde \theta}$ needs to make the second term in the loss $0$, i.e. $\vh_{\tilde \theta}$ is a consistent denoiser (see Definition \ref{def:consistency}) for all $t$. This is because we found another minimizer that minimizes the first term of the loss and makes the second term $0$. 
        \item $\vh_{\tilde \theta}$ satisfies (A1), (A2) since the optimal solution should be a feasible solution.
        \item  $\vh_{\tilde \theta}$ needs to satisfy Eq. \ref{eq:first_term_min}, since the first term in the loss has a unique minimizer. 
    \end{enumerate}  
    
    By Theorem 3.2 (part ii) of Consistent Diffusion Models~\citep{daras2023consistent}, the only function that satisfies properties 1., 2., 3. is the function $\vh_{\theta^*}$ and hence the solution is unique.

\end{proof}

\section{Experimental Details}
\label{sec:experimental_details}

In this section, we provide further regarding the SDXL finetuning experiments. We train all our models with a batch size of $16$ using a constant learning rate $1e-5$. For all our experiments, we use the Adam optimizer with the following hyperparameters: $\beta_1=0.9, \beta_2=0.999, \mathrm{weight \ decay}=0.01$. We train all of our models for at least $200,000$ steps or roughly $45$ epochs on FFHQ. The models trained with consistency were finetuned for $50,000$ steps, initialized from the models trained without consistency after $150,000$ steps. We did this to save computation time since training with consistency loss takes $\approx 3\times$ more time compared to vanilla training. During finetuning, we used a weight of $0.01$ for the consistency loss for the $t_n \in \{100,500\}$ models and a weight of $1e-4$ for our $t_n=800$ model. We noticed that further increasing the weight for the latter led to training collapse.

For our finetuning, we use LoRA with rank $4$, following the implementation of SDXL finetuning from the \texttt{diffusers} Github repository. We train all of our models on 16-bit precision to reduce the memory requirements and accelerate training speed. For all the experiments in the paper (including FID evaluation) the images were generated using $25$ inference steps and the DDIM~\citep{ddim} sampling algorithm. We underline that better performance could have been achieved by increasing the number of steps, the training and sampling precision and by carefully tuning the batch size. However, in this paper, we did not optimize for state-of-the-art unconditional generation performance but rather we focused on building a complete and exact framework for learning diffusion models from noisy data.

\section{Additional Results}
In this section, we provide additional results that were not included in the main paper. Figure \ref{fig:sdxl_inp_memorization_curves} shows the memorization curves for the SDXL inpainting experiment (see also Figure \ref{fig:sdxl_inp_attack}). Figure \ref{fig:comp_our_vs_somepalli_finetuned} compares the~\citet{somepalli2022diffusion} method for detecting training data replication with our proposed method that works by denoising extremely corrupted encodings of dataset images. We once again underline that it is not surprising that our method indicates higher memorization since it has access to more information (the noisy latents).

\begin{figure}[!htp]
    \centering
    \includegraphics[width=0.5\textwidth]{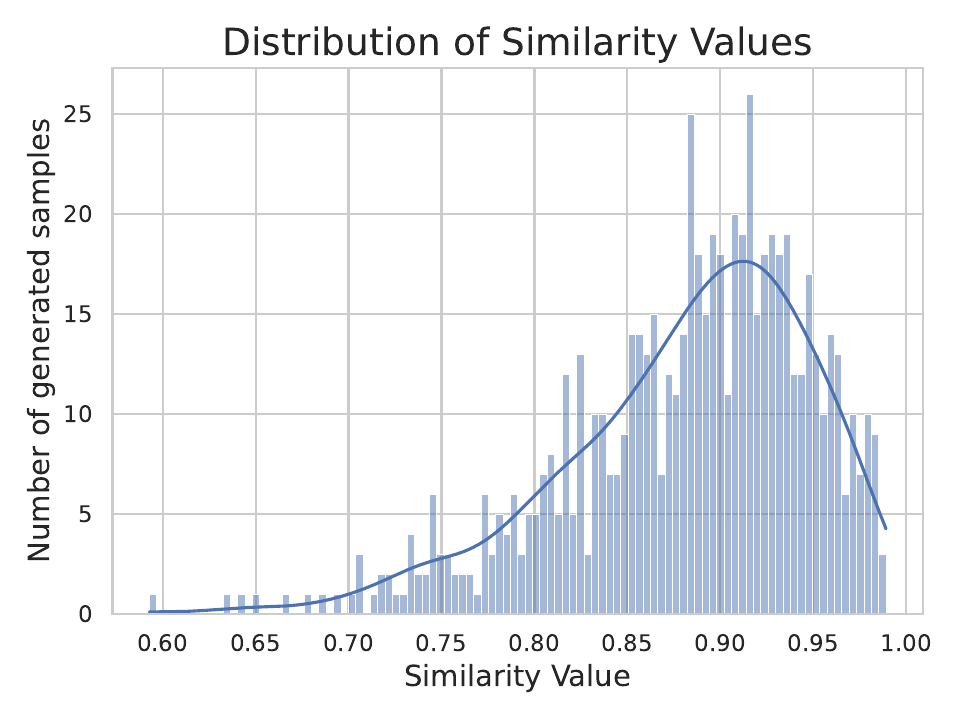}
    \caption{Distribution of image similarities of generated images with their nearest neighbors in the dataset for inpainting attack.}
    \label{fig:sdxl_inp_memorization_curves}
\end{figure}

\begin{figure}[!htp]
    \centering
    \includegraphics[width=0.5\textwidth]{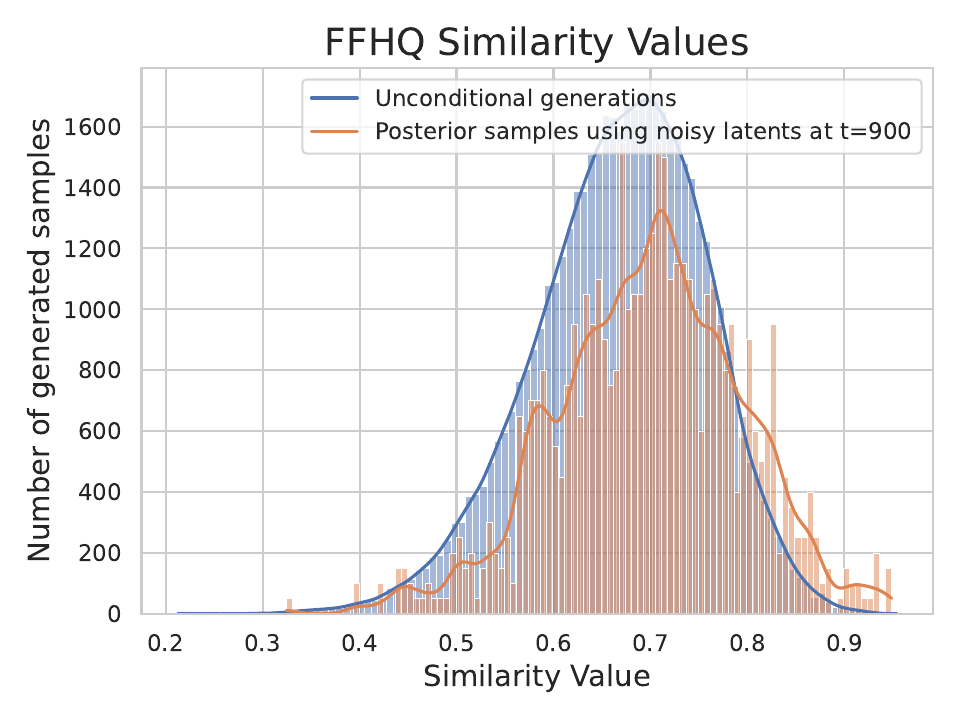}
    \caption{Comparison of \citep{somepalli2022diffusion} method for detecting training data replication with our proposed method that works by denoising extremely noisy dataset latents. The comparison is given for an SDXL model finetuned on clean FFHQ images. Our method (orange curve) gives a distribution that is more shifted to the right, indicating higher dataset memorization. This is not surprising since we have access to more information (noisy latents) compared to the baseline method.}
    \label{fig:comp_our_vs_somepalli_finetuned}
\end{figure}

\begin{figure*}[!ht]
  \centering
  \begin{subfigure}{0.7\textwidth}
    \includegraphics[width=\linewidth]{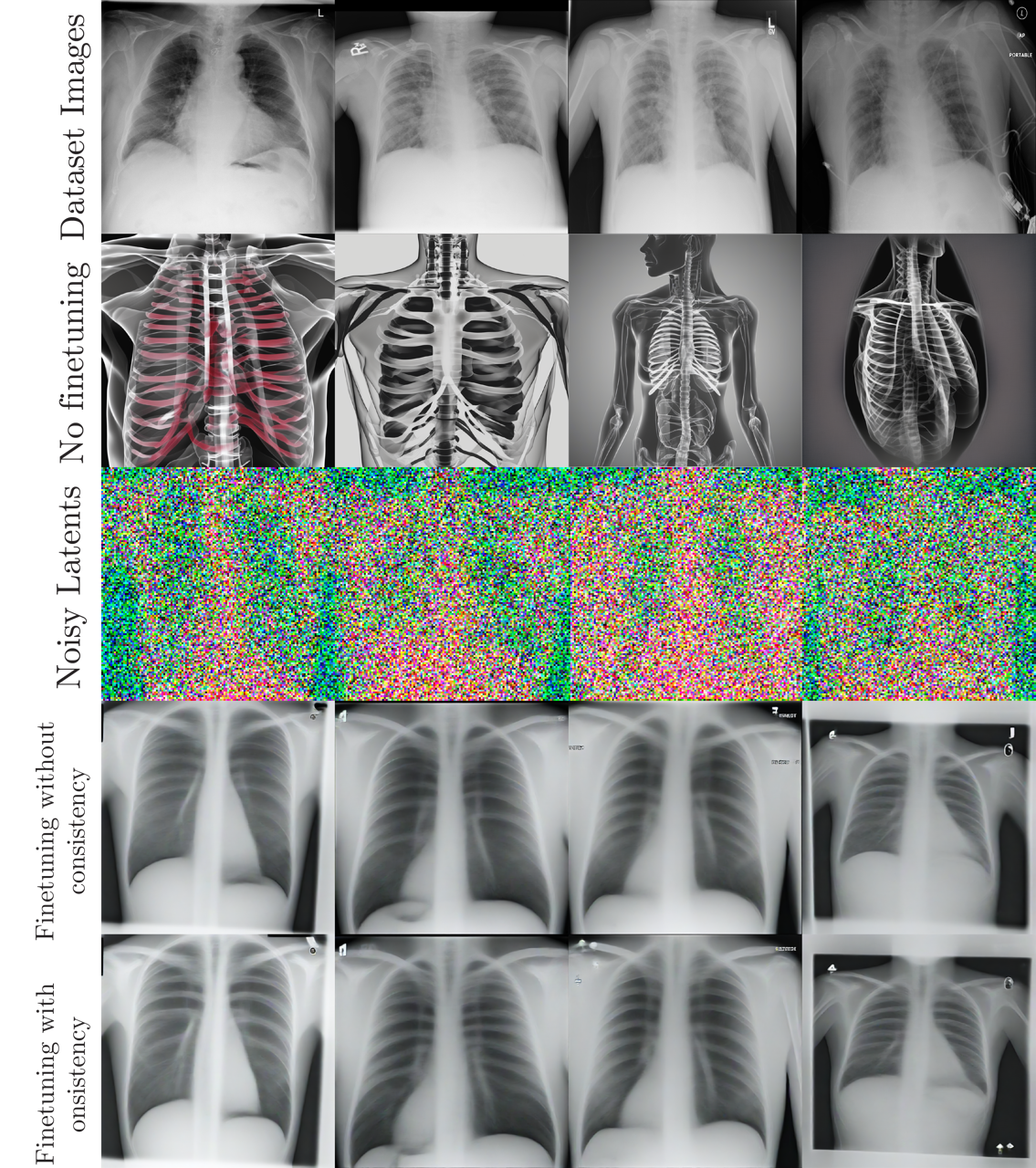}
  \end{subfigure}
  \caption{SDXL Finetuning on a dataset of X-rays. Row 1: samples of the training dataset, Row 2: generated samples without fine-tuning, Row 3: noisy samples that were used to fine-tune the model, Row 4: generated samples after fine-tuning without consistency, and, Row 5: generated samples after fine-tuning with consistency.}
  \label{fig:medical_gens}
\end{figure*}

\end{document}